\def\1{\bm{1}}
\def\vtheta{{\bm{\theta}}}
\def\ve{{\bm{e}}}
\def\vh{{\bm{h}}}
\def\vm{{\bm{m}}}
\def\vw{{\bm{w}}}
\def\vx{{\bm{x}}}
\def\vy{{\bm{y}}}
\def\vz{{\bm{z}}}
\def\mH{{\bm{H}}}
\def\mM{{\bm{M}}}
\def\mX{{\bm{X}}}
\def\mY{{\bm{Y}}}
\def\mZ{{\bm{Z}}}
\DeclareMathAlphabet{\mathsfit}{\encodingdefault}{\sfdefault}{m}{sl}
\SetMathAlphabet{\mathsfit}{bold}{\encodingdefault}{\sfdefault}{bx}{n}
\newcommand{\E}{\mathbb{E}}
\newcommand{\R}{\mathbb{R}}
\newcommand{\sigmoid}{\sigma}
\newtheorem{theorem}{Theorem}[section]
\newtheorem{lemma}[theorem]{Lemma}
\def\vphi{{\bm{\phi}}}
\title{Explaining and Mitigating the Modality Gap \\ in Contrastive Multimodal Learning}
\newcommand{\jointfirst}{\textsuperscript{\dag}}
\affiliation{
  Department of Electrical Engineering \& Computer Science, University of Michigan
}
\keywords{Modality Gap, Multimodal Learning, Contrastive Learning}
\date{\today}
\begin{document}

\makeDeepthinkHeader

\begin{figure}[h]
    \centering
    \vspace{-1.5em}
    \includegraphics[width=\linewidth]{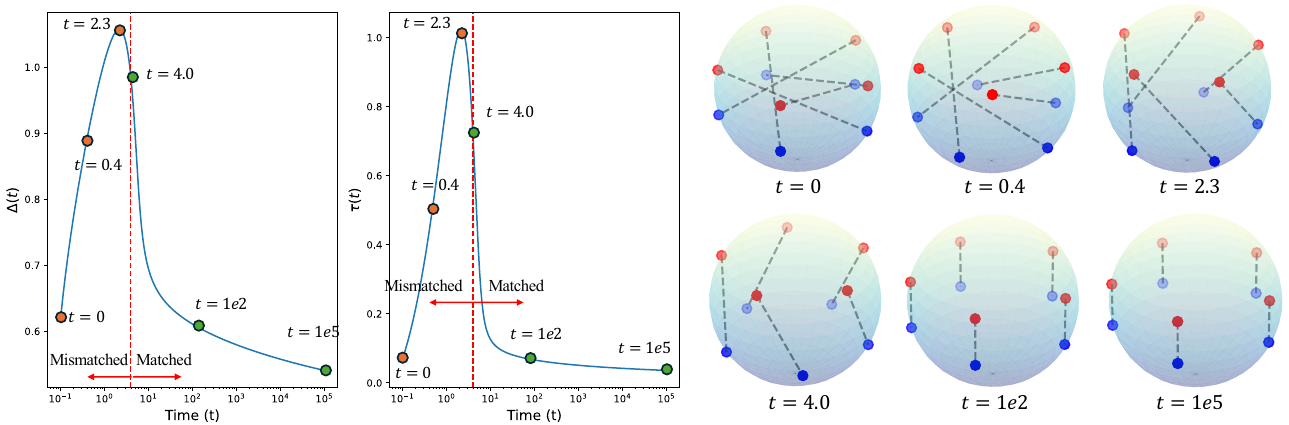}
    \caption{\textbf{Dynamics of modality gap $\Delta$ and temperature $\tau$ (defined in \Cref{sec:problem_setup}) during training on synthetic data.} Features from the two modalities are depicted in red and blue, with ground truth pairs connected by lines. At $t=4.0$, all pairs are successfully matched. Initially, modality gap increases due to significant mismatches between pairs, but it decreases as the level of mismatch diminishes. Notably, modality gap and temperature exhibit highly coupled dynamics throughout the learning process.}
    \label{fig:misalign}
\end{figure}

\newpage
\tableofcontents
\newpage



\section{Introduction}

Recently, significant progress has been made in multimodal learning, particularly in connecting text and image modalities through self-supervised methods that leverage large-scale paired data. These include image-text contrastive learning \citep{radford2021learning,saharia2022photorealistic,nichol2021glide}, image-text matching \citep{li2021albef,wang2022ofa,li2022blip}, and masked modeling \citep{li2021albef, wang2022beit3,wang2022git}. Following pre-training, shared conceptual representations enable a variety of downstream tasks, including text-to-image generation \citep{ramesh2021dalle,saharia2022photorealistic}, image captioning \citep{wang2022git,li2022blip}, and vision-based question answering \citep{wang2022ofa,dou2022empirical}. Among these, one of the most popular multimodal models is Contrastive Language–Image Pre-training (CLIP) \citep{radford2021learning}, which effectively learns visual concepts from language supervision through contrastive learning. CLIP jointly trains a vision model and a language model by embedding a large corpus of image-text pairs into a shared embedding space using self-supervised learning. The training process employs a contrastive learning objective \citep{simclr}, which encourages the model to bring similar pairs closer together in the embedding space, while it pushes dissimilar pairs further apart. The approach excels in zero-shot transfer for many downstream tasks across vision and language, even matching the performance of fully supervised models \citep{radford2021learning}.



Despite their empirical success, a complete understanding of the mechanisms underlying multimodal learning models is lacking, with their behavior in certain scenarios even defying intuitive expectations about their design. For instance, while the contrastive loss is designed to align image embeddings with their corresponding text pairs, recent studies \citep{liang2022mind} have revealed a surprising phenomenon known as \emph{modality gap}. As illustrated in \Cref{fig:persist_enlarge}, this gap manifests as a significant separation between the embeddings of matched image-text pairs, with the two modalities occupying approximately parallel yet distant spaces \citep{zhang2022diagnosing}. Deepening our understanding of the factors underlying modality gap could shed light on the mechanisms driving the success of multimodal learning, as well as pave the way towards developing more effective multimodal models.

\paragraph{Prior arts \& limitations.} Driven by this goal, existing studies have investigated modality gap from a largely empirical perspective. However, a comprehensive and rigorous explanation of modality gap remains elusive. For example, \citet{shi2023towards} empirically demonstrated that modality gap could be caused by the types of initialization and temperature scaling of the loss on simple datasets, but they fall short of providing theoretical justifications of their studies. \citet{schrodi2024effectstriggermodalitygap} investigated the role of imbalanced information between image and text modalities, implying that balancing the information complexity between text and image datasets could help mitigate modality gap. Again, their study is rather empirical without sufficient theoretical justifications. We postpone discussion of additional works and background on multimodal learning to \Cref{appendix:work}.

\paragraph{Understanding modality gap through learning dynamics.} One surprising aspect of modality gap is that it contradicts with global optimality. Under ideal conditions, optimality conditions of the training loss imply perfect alignment between text and image embeddings. This is consistent with recent theoretical studies on neural collapse in classification problems \citep{papyan2020prevalence}, which demonstrate that, with sufficiently large model capacity and perfect training, the classification head and the embeddings become perfectly aligned \citep{zhu2021geometric,yaras2022neural,jiang24i,wang2022linear}. Therefore, to gain deeper insight into the causes of modality gap, it is crucial to examine the factors influencing the learning dynamics of training these models. Moreover, empirical studies revealed several interesting phenomena in the learning dynamics, which could contribute to modality gap. Specifically, as shown by our experiments on both real datasets with practical networks (\Cref{fig:persist_enlarge}) and synthetic datasets with simplified models (\Cref{fig:misalign}), we observed the following when following the CLIP training procedure outlined by \citep{radford2021learning}:
\begin{itemize}[leftmargin=*]
    \item \textbf{Enlargement of modality gap under mismatch.} When there exist mismatches\footnote{For a pair $(\bm h_\vx, \bm h_\vy)$ of image and text embeddings, mismatch occurs if there exists an $\bm h_\vx'$ such that $\bm h_\vx'$ is ``closer'' to $\bm h_\vy$ than $\bm h_\vx$ (or the other way around)} of image and text embeddings at initialization, modality gap between image and text enlarges as training progresses, as shown in \Cref{fig:enlarge}. In particular, as shown in \Cref{fig:misalign}, this usually happens in the early phase of training when we start from random initializations, where modality gap first increases and then decreases after pairs of image and text embeddings become better aligned.
    \item \textbf{Stabilization of modality gap due to learned temperature.} Furthermore, as shown by \cite{liang2022mind}, at random initialization, images and texts typically occupy distinct regions or ``cones'' within the feature space, resulting in a substantial modality gap. Observations in \Cref{fig:preserve} and \Cref{fig:misalign} imply that the gap-closing process remains limited when starting from random initialization. While the size of modality gap decreases in the later stages of training as mismatched pairs are reduced, it often stabilizes at a nonzero value, with limited reduction even as the training loss converges. 

\end{itemize}



\begin{figure}[t]
    \centering
\begin{subfigure}{.4\textwidth}
    \includegraphics[width=\linewidth]{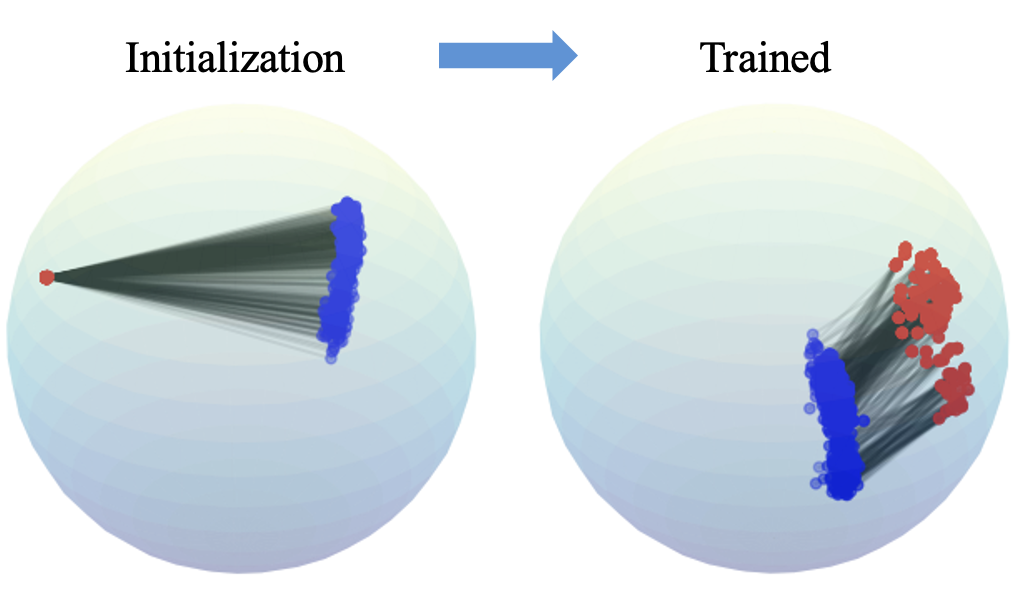}
    \caption{Stabilization}
  \label{fig:preserve}
\end{subfigure}
\begin{subfigure}{.4\textwidth}
    \includegraphics[width=\linewidth]{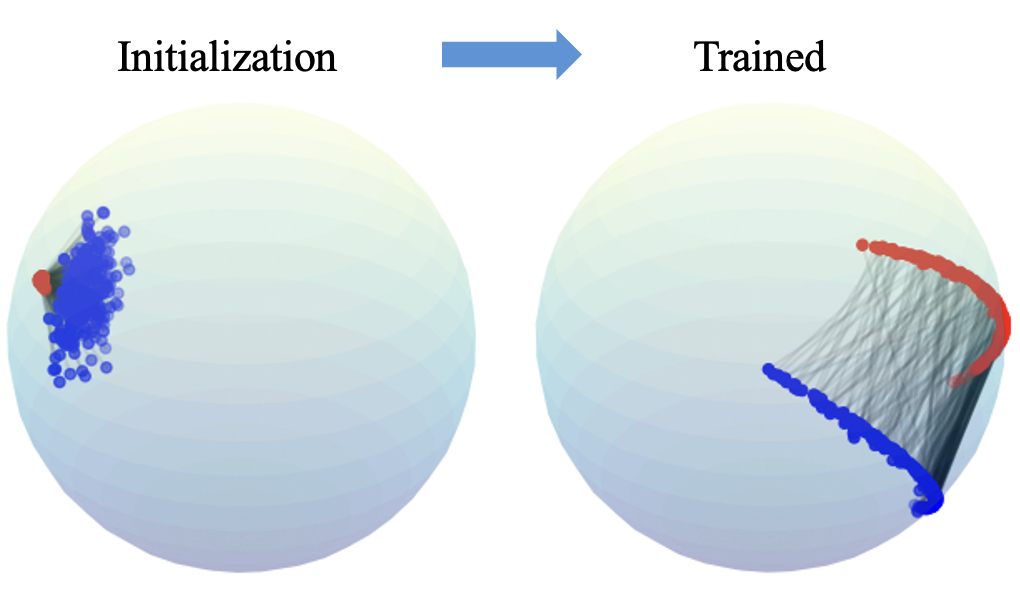}
    \caption{Enlargement}
  \label{fig:enlarge}
\end{subfigure}
    \caption{\textbf{Stabilization and enlargement of modality gap.} We visualize the CLIP text-image embedding space via PCA, where image features are in red and text features are in blue. A line connects each image-text pair. A modality gap emerges between image and text pairs. (a) After a long training, the gap between text and image still exists. (b) When two modalities are initialized with a small modality gap, the gap is still enlarged after training.}
    \label{fig:persist_enlarge}
\end{figure}


\paragraph{Summary of our contributions.} In this work, we conduct a theoretical analysis of contrastive multimodal learning by examining the learning dynamics through the lens of gradient flow -- a largely overlooked yet critical perspective for understanding the cause of modality gap. Our approach reveals the factors contributing to modality gap, explaining why it may stabilize at a certain level or even increase during training. Based on these insights, we propose theory-guided approaches to reduce the gap and improve downstream performance. Our key contributions are as follows:

\begin{itemize}[leftmargin=*]
    \item \textbf{Theoretical contributions}. Through a careful gradient flow analysis, we demonstrate that modality gap diminishes at the extremely slow rate of $\Omega(1/\log(t)^2)$ in training time $t$, explaining why modality gap is prevalent in multi-modal models such as CLIP. Our analysis highlights the role of learned temperature in the persistence of modality gap, indicating several ways to reduce modality gap. Moreover, we rigorously demonstrate why and how modality gap can be created at initialization, suggesting that it cannot be simply closed before training. 
    \item \textbf{Practical contributions.} Based on our theory, we propose practical methods, including temperature scheduling and exchanging features between modalities, to reduce modality gap and explore their benefits across several downstream tasks. We demonstrate that reducing the gap improves performance in image-text retrieval tasks but has a relatively smaller impact on visual classification including zero-shot and linear probing. In contrast, improving feature space uniformity proves to be more advantageous for visual classification tasks. 
\end{itemize}

\section{Problem Setup}
\label{sec:problem_setup}

In this section, we introduce the basic setup of the problem. We consider a set of $n$ paired training samples $\{(\vx_i, \vy_i)\}_{i=1}^n \subseteq \R^{d_x} \times \R^{d_y}$. Here, $(\vx_i,\vy_i)$ denotes a pair of two data points from different modalities (such as image and text) that are considered to be related to each other, e.g., $\vy_i$ is the text caption of image $\vx_i$. In multimodal learning, we want to align the image embedding $\vh_{\vtheta,X}^i= f_{\vtheta} (\vx_i)$ and the text embedding of $\vh_{\vphi,Y}^i= g_{\vphi} (\vy_i)$ through training two different deep networks $f_{\vtheta}: \R^{d_x} \rightarrow \R^d$ and $g_{\vphi}: \R^{d_y} \rightarrow \R^d$ for each $i$.


\paragraph{Contrastive loss for multimodal learning.} Let $\mH_{\vtheta,X}, \mH_{\vphi,Y} \in \R^{n \times d}$ collectively denote the $\ell_2$-normalized embeddings of two different modalities:
\begin{align*}
    \mH_{\vtheta,X} &= \begin{bmatrix} \mbox{norm}(f_{\vtheta}(\vx_1)) & \dots & \mbox{norm}(f_{\vtheta}(\vx_n)) \end{bmatrix}^\top,\ \ 
    \mH_{\vphi,Y} &= \begin{bmatrix} \mbox{norm}(g_{\vphi}(\vy_1)) & \dots & \mbox{norm}(g_{\vphi}(\vy_n)) \end{bmatrix}^\top,
\end{align*}
where the operator $\mbox{norm}(\bm z) = \bm z/\|\bm z\|_2$ for any $\bm z \in \R^d$.
As shown in \cite{radford2021learning}, we can jointly learn the parameters $\vtheta, \vphi$ of networks $f, g$ respectively via
\begin{equation}\label{eq:clip}
    \min_{\vtheta, \vphi, \nu} \; \ell(\beta(\nu) \mH_{\vtheta,X}\mH_{\vphi,Y}^{\top})
\end{equation}
where $\ell: \R^{n \times n} \rightarrow \R$ is a certain contrastive loss. Here, following the convention in \cite{radford2021learning}, $\beta(\cdot): \R \rightarrow \R^+$ is the inverse temperature as a function of some learnable parameter $\nu$, i.e., we have $\beta(\nu) = {1}/{\tau(\nu)} = \exp(\nu)$ for training CLIP models where $\tau$ is the conventional temperature. Additionally, for ease of exposition, we drop the superscripts and just write $\mH_X, \mH_Y$. 

\begin{wrapfigure}{r}{0.5\textwidth} 
\centering
\includegraphics[width=\linewidth]{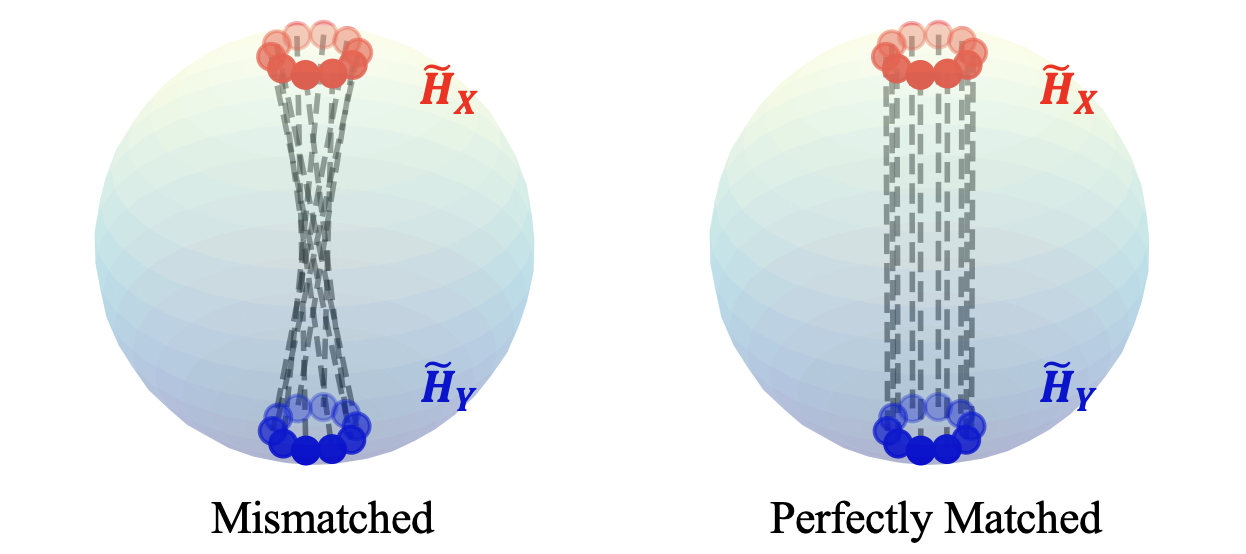}
\caption{\textbf{Parallel modalities with or without mismatched pairs.} Ground truth pairs are connected by a dashed line.}
\label{fig:parallel-match}
\end{wrapfigure}

The contrastive loss $\ell$ is determined solely by the pairwise ($\beta$-scaled) inner products between modalities. Its objective is to maximize the diagonal entries of $\beta \mH_X \mH_Y^\top$ while minimizing the off-diagonal elements. For instance, CLIP achieves this using a specific \emph{symmetric} cross-entropy (CE) loss for $\ell$, which we also adopt in this work. To derive this loss, we first define the standard (softmax) CE loss, $\ell_{\mathrm{CE}}(\vm, \ve^{(i)})$, for logits $\vm \in \R^n$ for a target one-hot distribution $\ve^{(i)} \in \R^n$ as:
\begin{equation*}
    \ell_{\mathrm{CE}}(\vm, \ve^{(i)}) := \log\left(\sum_{j=1}^n \exp(\vm_j)\right) - \vm_i.
\end{equation*}
Then, we can introduce $\ell$ as applying the usual CE loss to each row and column of $\mM$ individually and then averaging them:
\begin{equation}\label{eq:ell}
    \ell(\mM) := \frac{1}{2n}\sum_{i=1}^n \left[\ell_{\mathrm{CE}}(\mM_{:, i}, \ve^{(i)}) + \ell_{\mathrm{CE}}(\mM_{i, :}, \ve^{(i)})\right].
\end{equation}

\paragraph{Training loss with parallel embeddings.} Motivated by recent empirical studies \citep{zhang2022diagnosing}, in this work we assume that the representations of each modality are \emph{parallel}. Specifically, \citet{zhang2022diagnosing} has empirically discovered that \emph{inter-modality} variance and \emph{intra-modality} variance are found to be orthogonal. Mathematically, we can impose the parallel constraint by replacing $\mH_X$ with $\Tilde{\mH}_X = \begin{bmatrix} \sqrt{1-\gamma_X^2} \mH_X & \gamma_X \bm 1_n \end{bmatrix}$ and $\mH_Y$ with $\Tilde{\mH}_Y = \begin{bmatrix} \sqrt{1-\gamma_Y^2} \mH_Y & \gamma_Y \bm 1_n \end{bmatrix}$, where $\gamma_X, \gamma_Y \in [-1, 1]$. For simplicity, we assume that $\gamma_X = \gamma$ and $\gamma_Y = -\gamma$ for some $\gamma \in [-1,1]$.  Then, \eqref{eq:clip} becomes
\begin{equation}
\label{eq:clip_parallel}
\min_{\vtheta, \vphi, \nu, \gamma} \; \ell(\beta(\nu) \Tilde{\mH}_X \Tilde{\mH}_Y^\top) = \ell(\beta(\nu)(1-\gamma^2) \mH_X \mH_Y^\top).
\end{equation}
\paragraph{Gradient flow dynamics.} 
We consider the training dynamics of \eqref{eq:clip_parallel} under continuous-time gradient flow, i.e., for time $t \geq 0$, the quantities $\vtheta(t), \vphi(t), \nu(t), \gamma(t)$ satisfy
\begin{equation}\label{eq:grad_flow}
    \frac{d\vtheta(t)}{dt} = -\frac{\partial \ell}{\partial \vtheta},\quad \frac{d\vphi(t)}{dt} = -\frac{\partial \ell}{\partial \vphi}, \quad  \frac{d\nu(t)}{dt} = -\frac{\partial \ell}{\partial \nu},\quad  \frac{d\gamma(t)}{dt} = -\frac{\partial \ell}{\partial \gamma}
\end{equation}
with initial conditions $\vtheta(0) = \vtheta_0$, $\vphi(0) = \vphi_0$, $\nu(0) = \nu_0$, and $\gamma(0) = \gamma_0$. We refer the reader to \Cref{appen:setup} for the derivative and gradient computations.

\paragraph{Measures of modality gap and margin.} Based on the above assumptions of parallel embeddings between two modalities, we introduce two metrics of modality gap and margin that will be useful in our analysis. 
Given the reparameterized embeddings $\Tilde{\bm H}_X$ and $\Tilde{\bm H}_Y$ of the image $\bm X$ and text $\bm Y$ that we introduced above, where each row $\Tilde{\bm h}_{\bm x}^i, \Tilde{\bm h}_{\bm y}^i$ of $\Tilde{\bm H}_X, \Tilde{\bm H}_Y$ respectively correspond to a data sample, we define the modality centers as 
$$
\bm c_{\mX} = \frac{1}{n} \sum_{j=1}^n \Tilde{\bm h}_{\bm x}^j,\quad 
\bm c_{\mY} = \frac{1}{n} \sum_{j=1}^n \Tilde{\bm h}_{\bm y}^j.
$$
Following \cite{liang2022mind}, we can measure the \emph{modality gap} by the center distance between two modalities as
\begin{align*}
    \Delta := \| \bm c_{\bm X} - \bm c_{\bm Y}\|.
\end{align*}
As such, a larger center distance implies a larger modality gap. Moreover, the modality gap $\Delta$ also satisfies $\Delta \geq 2\gamma$, which will simplify our analysis in \Cref{sec:main}.

In our analysis, we also introduce a notion of \emph{margin} to measure the match (angles) between associated pairs $(\bm x_i,\bm y_i)$ in the embedding space. Intuitively, we want to measure the matchness by measuring the correlation between $\bm x_i$ and $\bm y_i$ in the embedding space. Let $\bm Z = \bm H_X \bm H_Y^\top$, which compute the correlation between $\bm H_X$ and $\bm H_Y$. We define the margin by  
\begin{equation*}
    \alpha(\mZ) := \min_{i\neq j}\; (\mZ_{i, i} - \mZ_{i, j}) \wedge (\mZ_{j, j} - \mZ_{i, j}),
\end{equation*}
where $\wedge$ denotes minimum. Based on $\alpha(\mZ)$, we say that $\mH_X, \mH_Y$ are \emph{perfectly matched} if $\alpha(\mH_X \mH_Y^\top) > 0$, otherwise we say that they are \emph{mismatched}.  An illustration of perfectly matched and mismatched data pairs is shown in \Cref{fig:parallel-match}. Intuitively, perfectly matched means all ground truth pairs are the closest to each other.

\section{Explaining the Modality Gap}\label{sec:main}

In this section, we provide our main theoretical results that explain how modality gap emerges and remains throughout training.

\subsection{Learning Temperature Stabilizes Modality Gap}\label{subsec:stable}
As shown in \Cref{fig:misalign}, modality gap $\Delta\geq2\gamma$ and temperature $\tau$ are highly coupled, implying that learnable temperature plays a crucial role in the rate at which modality gap closes. Based upon the problem setup in \Cref{sec:problem_setup}, the following lemma directly reveals this relationship.
\begin{lemma}\label{lem:1}
    Let $\nu(t)$ and $\gamma(t)$ be solutions to the gradient flow dynamics given in \eqref{eq:grad_flow}. Then we have
    \begin{equation}\label{eq:coupling}
        R = \frac{d\gamma/dt}{d\beta/dt} = - \frac{2\beta(\nu)\gamma}{\beta'(\nu)^2(1-\gamma^2)}
    \end{equation}
    for all $t\geq 0$, where $\beta'(\nu)$ denotes the derivative of $\beta$ with respect to $\nu$. Moreover, given $\beta(\nu)=\exp(\nu)$, we have $\beta'(\nu) = \beta$ and the following holds:
    \begin{equation}\label{eq:beta_gamma}
        \beta = \beta_0 \sqrt{\frac{\gamma_0}{\gamma}} \exp\left(\frac{\gamma^2 - \gamma_0^2}{4}\right), \quad \text{and}\quad R = \Theta(1/\beta).
    \end{equation}
\end{lemma}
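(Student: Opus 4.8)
The plan is to exploit a structural feature of the parallel-embedding objective \eqref{eq:clip_parallel}: after imposing the parallel constraint, the loss depends on the two scalars $\nu$ and $\gamma$ \emph{only} through the single combination $s(\nu,\gamma) := \beta(\nu)(1-\gamma^2)$, since $\mH_X$ and $\mH_Y$ depend only on $\vtheta$ and $\vphi$. Writing $L(s) := \ell(s\,\mH_X\mH_Y^\top)$ with $\mH_X,\mH_Y$ frozen, the chain rule gives $\partial\ell/\partial\nu = L'(s)\,\beta'(\nu)(1-\gamma^2)$ and $\partial\ell/\partial\gamma = -2L'(s)\,\beta(\nu)\gamma$. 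Substituting into the gradient flow \eqref{eq:grad_flow} yields $d\gamma/dt = 2L'(s)\beta(\nu)\gamma$ and $d\nu/dt = -L'(s)\beta'(\nu)(1-\gamma^2)$, hence $d\beta/dt = \beta'(\nu)\,d\nu/dt = -L'(s)\,\beta'(\nu)^2(1-\gamma^2)$. Taking the ratio $R = (d\gamma/dt)/(d\beta/dt)$, the scalar $L'(s)$ — the only place the full network and loss geometry enters — cancels, leaving exactly \eqref{eq:coupling}. The one subtlety is that this ratio only makes literal sense where $d\beta/dt \neq 0$; I would either restrict to such $t$ or, preferably, recast the identity as a conservation law valid for all $t$.

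For the explicit formula under $\beta(\nu) = \exp(\nu)$ we have $\beta'(\nu) = \beta$, so \eqref{eq:coupling} becomes $R = -2\gamma/(\beta(1-\gamma^2))$. I would then introduce $\Phi(t) := 2\log\beta(t) + \log|\gamma(t)| - \tfrac12\gamma(t)^2$ and check $d\Phi/dt = 0$ by direct substitution: $d\Phi/dt = \tfrac{2}{\beta}\dot\beta + (\tfrac1\gamma - \gamma)\dot\gamma = -2L'(s)\beta(1-\gamma^2) + 2L'(s)\beta(1-\gamma^2) = 0$ using the expressions for $\dot\beta,\dot\gamma$ above. Therefore $\Phi(t) \equiv \Phi(0)$, and rearranging $2\log(\beta/\beta_0) = \log(\gamma_0/\gamma) + \tfrac12(\gamma^2-\gamma_0^2)$ and exponentiating gives $\beta = \beta_0\sqrt{\gamma_0/\gamma}\,\exp\!\big((\gamma^2-\gamma_0^2)/4\big)$, the first half of \eqref{eq:beta_gamma}. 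Equivalently one can separate variables in $d\gamma/d\beta = R$; the conservation-law route just avoids invoking invertibility of $t \mapsto \beta(t)$.

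For $R = \Theta(1/\beta)$, I would write $R = -\tfrac1\beta \cdot \tfrac{2\gamma}{1-\gamma^2}$ and bound the factor $\tfrac{2\gamma}{1-\gamma^2}$ between constants. The needed fact is that $\gamma(t)$ stays in a fixed compact subinterval of $(-1,1)$ bounded away from $\pm1$ (so $1-\gamma^2$ is bounded below) and, in the regime under consideration, away from $0$: a sign analysis of $d\gamma/dt = 2L'(s)\beta(\nu)\gamma$ shows $\gamma$ never crosses $0$ and that $|\gamma|$ is monotone once $L'(s)$ has a fixed sign, so starting from $|\gamma_0|<1$ it stays confined; treating $\beta_0,\gamma_0$ as initialization-dependent constants then makes $2|\gamma|/(1-\gamma^2)$ comparable to a constant over the relevant phase of training. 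I expect this last point — pinning down in exactly what sense $\gamma$ behaves like a $\Theta(1)$ quantity, so that the $\Theta$ in $R = \Theta(1/\beta)$ is warranted — to be the only place requiring care; the two computations above (the cancellation yielding \eqref{eq:coupling} and the conserved $\Phi$) are mechanical once the observation that $\ell$ sees $(\nu,\gamma)$ only through $s$ is in hand.
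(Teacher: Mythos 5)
Your proof is correct and follows essentially the same route as the paper. The paper computes $\partial\ell/\partial\nu = -\beta'(\nu)(1-\gamma^2)\,g_{s}(\mZ)$ and $\partial\ell/\partial\gamma = 2\beta(\nu)\gamma\,g_{s}(\mZ)$ where $g_a(\mZ) := -\langle \mZ, \nabla\ell(a\mZ)\rangle$; this $g_s(\mZ)$ is exactly your $-L'(s)$, and the cancellation in the ratio is the same observation. For the explicit $\beta(\gamma)$ formula the paper separates variables in $d\gamma/d\beta$ and integrates; your conserved quantity $\Phi$ is just the integrated form, so the two are equivalent, with your phrasing having the small advantage of sidestepping invertibility of $t\mapsto\beta(t)$, as you note.

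One remark on the $R = \Theta(1/\beta)$ step, since you correctly flag it as the one place requiring care: the paper's proof does not actually address it, and your instinct that it is delicate is right. In fact, writing $R = -\tfrac{1}{\beta}\cdot\tfrac{2\gamma}{1-\gamma^2}$ and bounding the second factor between positive constants requires $\gamma$ to stay bounded away from both $0$ and $\pm1$, but along the trajectory the conservation law $\beta = \beta_0\sqrt{\gamma_0/\gamma}\exp((\gamma^2-\gamma_0^2)/4)$ forces $\gamma \to 0$ as $\beta \to\infty$, so in fact $\gamma = \Theta(1/\beta^2)$ and hence $R = \Theta(1/\beta^3)$ along the flow. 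The pointwise-in-$t$ statement one can honestly make with initialization-dependent constants is $R = O(1/\beta)$, which is all the paper uses (namely that $R\to 0$); treating $\gamma$ as a free variable ranging over a compact subinterval of $(0,1)$, as you propose, recovers $\Theta(1/\beta)$ but is not the regime the dynamics stay in. So your hesitation is well-placed; just be aware it points at a looseness in the paper's own $\Theta$ claim rather than a gap in your argument.
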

The proof of \Cref{lem:1} is provided in \Cref{app:lem1}. The lemma relates the rates at which $\gamma$ and $\beta$ change, while $R = \Theta(1/\beta)$ implies that the ratio $R$ decays to zero as $\beta$ grows to infinity. This implies that an increasing $\beta$ will dominate the decrease in $\gamma$, preventing modality gap $\Delta$ from closing (i.e., its lower bound $2\gamma$ remains positive). This is made precise in the following theorem.


\begin{theorem}
\label{thm:1}
Based on the problem setup in \Cref{sec:problem_setup}, consider the gradient flow dynamics \eqref{eq:grad_flow} for solving \eqref{eq:clip_parallel}. Suppose $\mZ(t) = \mH_X(t) \mH^\top_Y(t)$ and the initial temperature $\beta_0 \geq \log(4(n-1)) / (\overline{\alpha}(1-\gamma_0^2))$ with $\beta(\nu) = \exp(\nu)$, and assume that the margin satisfies $\alpha(\mZ(t)) \geq \overline{\alpha}$ for all $t \geq 0$ for some $\overline{\alpha} > 0$. Then modality gap $\Delta$ satisfies
\begin{equation}\label{eqn:modality-gap}
    \Delta(t) \geq \Omega\left(\frac{1}{\log(t)^2}\right) \; \mbox{for all}\;t \geq 0.
\end{equation}
\end{theorem}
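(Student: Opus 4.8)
The strategy is to convert the coupling identity from Lemma~\ref{lem:1} into a differential inequality for $\beta$ alone, solve (or bound) it to get an upper bound on the growth rate of $\beta(t)$, and then translate that back into a lower bound on $\gamma(t)$ via the relation $\beta = \beta_0\sqrt{\gamma_0/\gamma}\,\exp((\gamma^2-\gamma_0^2)/4)$, finally using $\Delta \geq 2\gamma$. The key quantitative point is that $\beta$ can grow at most \emph{exponentially} in $t$ (so $\log\beta(t) = O(t)$), which after inverting the $\beta$–$\gamma$ relation forces $\gamma(t) = \Omega(1/(\log t)^2)$.

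\medskip
\textbf{Step 1: Bound $d\nu/dt$ from above.} Since $\nu$ evolves by gradient flow on $\ell(\beta(\nu)(1-\gamma^2)\mH_X\mH_Y^\top)$ and $\beta = \exp(\nu)$, I would compute $d\nu/dt = -\partial\ell/\partial\nu$ explicitly. Using the margin assumption $\alpha(\mZ(t)) \geq \overline\alpha$ together with the standing lower bound on $\beta_0$, the softmax weights on the off-diagonal entries are exponentially small, so the diagonal terms dominate and one gets a bound of the form $d\nu/dt \leq C\beta(1-\gamma^2) \cdot (\text{small})$, but more crudely $d\nu/dt \leq C$ for an absolute constant $C$ (the cross-entropy gradient with respect to a scalar scaling is bounded once the margin is positive and $\beta$ is large enough that things are near-saturated). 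This yields $\nu(t) \leq \nu_0 + Ct$, i.e. $\beta(t) \leq \beta_0 e^{Ct}$, hence $\log\beta(t) = O(t)$ (more precisely $\log\beta(t) \leq \log\beta_0 + Ct$).

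\medskip
\textbf{Step 2: Invert the $\beta$–$\gamma$ relation.} From \eqref{eq:beta_gamma}, $\log\beta = \log\beta_0 + \tfrac12\log(\gamma_0/\gamma) + \tfrac14(\gamma^2-\gamma_0^2)$. Since $\gamma \in (0,\gamma_0]$ stays bounded, the quadratic term is $O(1)$, so $\log(1/\gamma) = 2\log\beta + O(1) \leq 2(\log\beta_0 + Ct) + O(1) = O(t)$. Wait — that only gives $\gamma \geq e^{-O(t)}$, which is far too weak. The fix: I should not bound $d\nu/dt$ by a constant but instead track the coupling more carefully. Using $R = d\gamma/d\beta = \Theta(1/\beta)$ from Lemma~\ref{lem:1}, we have $d\gamma/dt = R\,d\beta/dt = \Theta(1/\beta)\,d\beta/dt = \Theta(d(\log\beta)/dt)$, so $\gamma(t) - \gamma_0 = \Theta(\log(\beta_0/\beta(t)))$, i.e. $\log\beta(t) = \Theta(\gamma_0 - \gamma(t)) + \log\beta_0$ — this just re-derives Step~2's relation and shows $\beta$ stays bounded iff $\gamma$ stays bounded away from $0$. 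The real content must be an \emph{upper} bound on $d\gamma/dt$ forcing slow decay: from $d\gamma/dt = -2\beta\gamma/(\beta^2(1-\gamma^2)) = -2\gamma/(\beta(1-\gamma^2))$ and a lower bound $\beta(t) \geq c\,t$ eventually (since $d\beta/dt > 0$ and in fact $d\beta/dt$ is bounded below by a positive constant once matched, as the loss keeps driving $\nu$ up), we get $d\gamma/dt \geq -\tfrac{2\gamma}{(1-\gamma_0^2)\,c\,t}$, a separable inequality whose solution is $\gamma(t) \geq \gamma_{t_0}(t_0/t)^{\kappa}$ for $\kappa = 2/((1-\gamma_0^2)c)$ — polynomial decay, not the claimed $1/(\log t)^2$.

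\medskip
\textbf{Step 3 (the actual mechanism): use $d\beta/dt$ proportional to $\beta$, not constant.} The resolution is that $d\nu/dt$ is bounded below by a positive constant (so $d\beta/dt = \beta\,d\nu/dt \geq c\beta$, giving $\beta$ genuine exponential growth $\beta(t) \geq \beta_0 e^{ct}$), while simultaneously $d\gamma/dt = -2\gamma/(\beta(1-\gamma^2))$. Substituting the exponential lower bound on $\beta$: $d\gamma/dt \geq -\tfrac{2\gamma}{(1-\gamma_0^2)}\beta_0^{-1}e^{-ct}$. Integrating, $\log(\gamma_0/\gamma(t)) \leq \tfrac{2}{(1-\gamma_0^2)\beta_0 c}(1 - e^{-ct}) \leq \tfrac{2}{(1-\gamma_0^2)\beta_0 c}$, so $\gamma(t)$ is in fact bounded \emph{below by a positive constant} — even stronger than needed. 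The $1/(\log t)^2$ rate therefore comes from the \emph{opposite} regime where $d\nu/dt \to 0$; I need to establish that $d\nu/dt = \Theta(\beta(1-\gamma^2)e^{-\beta(1-\gamma^2)\overline\alpha})$-type decay is \emph{not} what happens, but rather that $\log\beta(t) = \Theta(\log t)$ because $d\beta/dt = \Theta(1)$ asymptotically (the scalar temperature gradient saturates to a constant). Then $\log\beta(t) = \log t + O(1)$, and plugging into the inverted relation $\log(1/\gamma(t)) = 2\log\beta(t) + O(1) = 2\log t + O(1)$ gives $\gamma(t) = \Omega(1/t^2)$ — still polynomial. To actually get $1/(\log t)^2$ I must have $\beta(t) = \Theta(\sqrt{\log t})$-scale...

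\medskip
\textbf{Honest assessment of the obstacle.} Reconciling the exact power of $\log$ is the crux: the claimed bound $\Delta(t) \geq \Omega(1/(\log t)^2)$ suggests that along the flow $\gamma \sim 1/(\log t)^2$, which by \eqref{eq:beta_gamma} means $\log\beta \sim \tfrac12\log\log t$, i.e. $\beta \sim \sqrt{\log t}$, i.e. $\nu(t) = \log\beta(t) \sim \tfrac12\log\log t$, i.e. $d\nu/dt \sim 1/(2t\log t)$. So the real technical heart is: (i) show the temperature gradient $-\partial\ell/\partial\nu$, as a function of the state, behaves like $\Theta(1/(t\log t))$ along the trajectory — this requires combining the margin assumption with a precise estimate of how the softmax cross-entropy gradient in $\beta$ scales when all pairs are matched with margin $\geq\overline\alpha$ and $\beta$ is large, namely $-\partial\ell/\partial\nu \asymp \beta(1-\gamma^2)\overline\alpha\,e^{-\beta(1-\gamma^2)\overline\alpha}$ up to polynomial-in-$(n,\beta)$ factors; (ii) feed this into the ODE for $\nu$, whose solution with a right-hand side that is (roughly) $\beta e^{-\beta} = e^{\nu}e^{-e^\nu}$ indeed grows like $\log\log t$; (iii) translate via $\gamma = \Theta(\beta^{-2})$ (from \eqref{eq:beta_gamma} up to the bounded exponential factor) and $\Delta \geq 2\gamma$. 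The main obstacle is step (i): getting two-sided control of $\partial\ell/\partial\nu$ uniformly along the flow, since this requires the margin lower bound $\overline\alpha$ to hold for all $t$ (given as a hypothesis) and a careful accounting of the $n-1$ off-diagonal competitors and the symmetrization in \eqref{eq:ell}; the initial-temperature condition $\beta_0 \geq \log(4(n-1))/(\overline\alpha(1-\gamma_0^2))$ is exactly what is needed to enter and stay in the saturated regime where this estimate is valid. Once (i) is in hand, (ii) and (iii) are a separable-ODE computation and an algebraic inversion of \eqref{eq:beta_gamma}, respectively.
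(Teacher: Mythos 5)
Your proposal never becomes a proof; it is a sequence of false starts followed by an ``honest assessment'' that stops one step short of the argument. That said, the final assessment does land on the correct mechanism, which coincides with the paper's: along the flow the temperature gradient is of order $\beta(1-\gamma^2)\,e^{-\beta(1-\gamma^2)\overline\alpha}$ (up to polynomial factors in $n$), and feeding this back into the coupled ODEs is what produces the $1/(\log t)^2$ rate. But two concrete gaps keep this from being a proof.

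First, you gesture at the estimate $-\partial\ell/\partial\nu \asymp \beta(1-\gamma^2)\overline\alpha\,e^{-\beta(1-\gamma^2)\overline\alpha}$ but never supply it. This is where the work is. The paper establishes exactly the needed \emph{one-sided} bound through an auxiliary lemma (Lemma~\ref{lemma:mu}(b)): under the margin hypothesis $\alpha(\mZ(t))\geq\overline\alpha$ and the standing lower bound on $\beta_0(1-\gamma_0^2)$, the per-row softmax quantity $\mu_i(\vz,a)$ is maximized (over all $\vz$ with margin $\geq\overline\alpha$) by the ``worst-case'' vector $\overline\alpha(\ve^{(i)}-\bm 1)$, yielding $g_{\beta(1-\gamma^2)}(\mZ)\leq \overline\alpha(n-1)\exp(-\beta(1-\gamma^2)\overline\alpha)$. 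This is a genuine lemma requiring a monotonicity argument on the softmax, not a heuristic. Your proposal also asks for ``two-sided control'' of $\partial\ell/\partial\nu$; this overstates the requirement. Since you only want a \emph{lower} bound on $\gamma(t)$, you only need an \emph{upper} bound on $|d\gamma/dt|$, hence only the one-sided bound above.

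Second, even granting that bound, you still have to solve or compare the resulting ODE, and you leave this as ``a separable-ODE computation.'' The paper does this cleanly by staying in the $\gamma$ coordinate rather than $\nu$: substitute the closed-form $\beta(\gamma)=\beta_0\sqrt{\gamma_0/\gamma}\,\exp((\gamma^2-\gamma_0^2)/4)$ from \eqref{eq:beta_gamma} into $d\gamma/dt$, drop the bounded exponential prefactor, define a comparison variable $\tilde\gamma$ solving $d\tilde\gamma/dt=-c_1\sqrt{\tilde\gamma}\,\exp(-c_2/\sqrt{\tilde\gamma})$, change variables $\omega=1/\sqrt{\tilde\gamma}$ to get $d\omega/dt=\tfrac12 c_1\omega^2 e^{-c_2\omega}$, absorb the polynomial $\omega^2$ into the exponential (the inequality $(1/55)x^2 e^{-x/10}\leq 1$), and integrate to get $\omega(t)\leq \Theta(\log t)$, hence $\gamma(t)\geq\Omega(1/(\log t)^2)$. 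Your proposed route through $\nu$ and $\log\log t$ would ultimately require the same two ingredients in slightly different coordinates; it is not a genuinely different proof, just an incomplete version of the same one. Your Steps 1--3 (constant bound on $d\nu/dt$, exponential growth of $\beta$) give the wrong regime and should be discarded; they correspond to assumptions (e.g.\ $d\nu/dt$ bounded below by a positive constant) that do not hold once the pairs are matched with positive margin.
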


The proof can be found in \Cref{app:thm1}. To elaborate, consider the simplified setting where we replace $\ell$ in \eqref{eq:clip_parallel} with the scalar function $\ell(m) = \exp(-m)$ mimicking the exponential tail of the cross-entropy loss. The gradient flow in this case is simply given by ${d\gamma}/{dt} = -2\beta \gamma \exp(-\beta (1-\gamma^2))$, so via the equality $\eqref{eq:beta_gamma}$ we have $d\gamma/dt = \Omega\left(-\gamma^{1/2}\exp(-c\gamma^{-1/2})\right) = \Omega\left(-\gamma^{3/2}\exp(-c'\gamma^{-1/2})\right)$, for some $c' < c$. Integrating this equation and applying the inequality $\Delta \geq 2\gamma$ yields $\Delta(t) \geq \Omega(1/\log(t)^2)$.

\begin{figure}[t]
    \centering
    \begin{subfigure}[t]{0.5\textwidth}
        \centering
        \includegraphics[width=\linewidth]{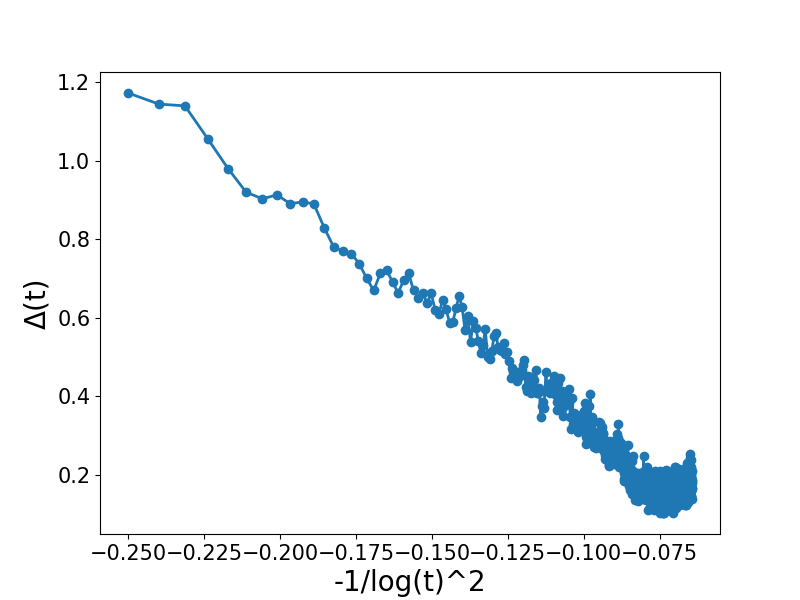}
        \caption{Slow closure of modality gap}
        \label{fig:thm1_verify}
    \end{subfigure}%
    \hspace{0.1in}
    \begin{subfigure}[t]{0.35\textwidth}
        \centering
        \includegraphics[width=\linewidth]{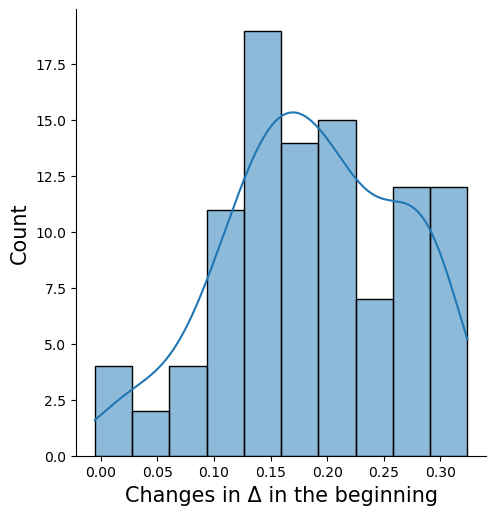}
        \caption{Enlargement of modality gap}
        \label{fig:thm2_verify}
    \end{subfigure}
    \caption{\textbf{Verifying theoretical results.} We sample 2048 random pairs of MSCOCO examples and utilize a standard CLIP model to (a) plot $\Delta$ throughout training from scratch and (b) plot a histogram of the change in $\Delta$ across 100 initializations in the first 10 steps of training. More details regarding the experimental setup can be found in \Cref{appendix:exp}.}
    \label{fig:thms_verify}
\end{figure}

\paragraph{Remarks.} We discuss \Cref{thm:1} in the following:
\begin{itemize}[leftmargin=*]
    \item \textbf{Slow closure of modality gap.} The result in \eqref{eqn:modality-gap} indicates that $\Delta$, with rate $\Omega(1/\log(t)^2)$, approaches zero exceedingly slowly. Consequently, this lower bound implies that closing modality gap would require an impractically long training time. As shown in \Cref{fig:thm1_verify}, this can be verified in practice on CLIP models trained on the MSCOCO dataset. Specifically, we sample 2048 random pairs of data and train the model from scratch for 10000 steps and record modality gap during training. In \Cref{fig:thm1_verify}, we report modality gap from the 100th step when the gap begins to decrease consistently. The modality gap $\Delta (t)$ versus $-1/\log(t)^2$ exhibits a linear relationship, verifying our result and slow closure.
    \item \textbf{Discussion on the assumptions.} In \Cref{thm:1}, we assume a positive margin $\overline{\alpha} > 0$ throughout training, ensuring a minimum $\overline{\alpha}$ gap between perfectly matched and mismatched pairs. While primarily for analytical purposes, this assumption can be relaxed in practice, as shown in the early stage of \Cref{fig:thm1_verify}, where the result holds even with many mismatched pairs. Additionally, while parallel embeddings between modalities are typically valid, breaking this constraint can help to mitigate modality gap, a strategy we leverage in \Cref{experiments}. We also assume a sufficiently large initial inverse temperature $\beta_0$, a common practice \cite{radford2021learning}. Crucially, the choice $\beta(\nu) = \exp(\nu)$ is essential for achieving the rate in \Cref{thm:1}; changing $\beta$ or employing a temperature schedule (see \Cref{experiments}) can significantly change the convergence rate of modality gap. For details on convergence rates with various schemes, see \Cref{app:temperature}. We evaluate these methods for reducing modality gap and their impact on downstream performance in \Cref{experiments}.
    
  
\end{itemize}

\subsection{Mismatched Pairs Enlarge Modality Gap at Early Training Stages}

Second, we show that modality gap can be enlarged at the early stage of training, due to the large amount of mismatched pairs caused by random initialization. This further adds to the difficulty of closing modality gap.
\begin{theorem}
\label{thm:2}
Suppose the rows of $\mH_X(0), \mH_Y(0)$ are drawn independently and uniformly from $\mathbb{S}^{d-1}$ and let $a = \beta_0(1-\gamma_0^2)$. Then with probability $1-\delta$, we have
\begin{equation}\label{eq:repulse}
    \left.\frac{d\Delta}{dt}\right|_{t=0} \geq 4 \beta_0 \gamma_0 \left(\frac{\xi_1 - 2e^a \epsilon}{\xi_2 + (e^a-e^{-a})\epsilon} - 2\epsilon\right)
\end{equation}
where $\epsilon = \sqrt{\log((4n+1)/\delta)/(2n)}$ and 
\begin{align*}
    \xi_1 &= \Gamma\left(\frac{d}{2}\right) \left(\frac{2}{a}\right)^\rho \left( I_{\rho-1}(a) - \frac{2\rho}{a} I_\rho(a) \right), \quad
    \xi_2 = \Gamma\left(\frac{d}{2}\right) \left(\frac{2}{a}\right)^\rho I_\rho(a),
\end{align*}
where $\rho = (d-2)/2$, $\Gamma$ is the gamma function, and $I_\rho(z)$ is the modified Bessel function of the first kind.
\end{theorem}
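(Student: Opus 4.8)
The plan is to compute $\frac{d\Delta}{dt}\big|_{t=0}$ directly from the gradient flow and then control the resulting random quantity via concentration. First I would express $\Delta$ in terms of the modality centers and differentiate: since $\Delta \geq 2\gamma$ and in fact the relevant lower bound comes from the $\gamma$-component, I would lower bound $\frac{d\Delta}{dt}$ by $2\frac{d\gamma}{dt}$ (up to the contribution from the change in $\bm c_{\bm X} - \bm c_{\bm Y}$, which at $t=0$ should be controllable). From the gradient computation for \eqref{eq:clip_parallel}, $\frac{d\gamma}{dt} = -\partial\ell/\partial\gamma$ can be written explicitly in terms of $\beta_0$, $\gamma_0$, and the softmax weights applied to the logit matrix $\beta_0(1-\gamma_0^2)\mH_X(0)\mH_Y(0)^\top$. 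The key structural fact is that this derivative has the form $4\beta_0\gamma_0 \cdot (\text{average of diagonal softmax masses} - \text{something})$, so the sign and magnitude hinge on how much probability mass the row/column softmaxes place on the diagonal versus off-diagonal entries.

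The second step is to handle the expectation over the random embeddings. For rows drawn uniformly from $\mathbb{S}^{d-1}$, the diagonal inner products $\langle \bm h_{\bm x}^i, \bm h_{\bm y}^i\rangle$ and off-diagonal ones $\langle \bm h_{\bm x}^i, \bm h_{\bm y}^j\rangle$ are all distributed as the inner product of two independent uniform points on the sphere, i.e. with density proportional to $(1-u^2)^{(d-3)/2}$ on $[-1,1]$. The quantities $\xi_1$ and $\xi_2$ are exactly the integrals $\int_{-1}^1 e^{au} u \,(1-u^2)^{\rho-1}\,du$-type and $\int_{-1}^1 e^{au}(1-u^2)^{\rho-1}\,du$-type moments, which evaluate to the stated Bessel-function expressions via the standard integral representation $I_\nu(a) \propto \int_{-1}^1 e^{au}(1-u^2)^{\nu-1/2}\,du$. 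So $\xi_1/\xi_2$ is (up to the $2\rho/a$ correction) the expected value of the softmax-weighted inner product, and in expectation $\frac{d\Delta}{dt}\big|_{t=0} \approx 4\beta_0\gamma_0 \cdot \xi_1/\xi_2 > 0$ — this is the "repulsion" that enlarges the gap.

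The third step is concentration: I would use a bounded-differences / Hoeffding argument over the $2n$ independent random rows (plus one more for an auxiliary quantity, giving the $4n+1$ in the union bound) to show that the empirical averages of $e^{a\langle\cdot,\cdot\rangle}$, of $e^{a\langle\cdot,\cdot\rangle}\langle\cdot,\cdot\rangle$, and of the centering terms each concentrate within $\epsilon = \sqrt{\log((4n+1)/\delta)/(2n)}$ of their means, since each summand lies in an interval of length at most $e^a - e^{-a}$ (or $2e^a$, or $2$ for the raw inner products). Plugging these deviations into the ratio — numerator loses at most $2e^a\epsilon$, denominator gains at most $(e^a - e^{-a})\epsilon$, and the extra $-2\epsilon$ absorbs the fluctuation of the center-difference term — yields exactly the claimed bound \eqref{eq:repulse}.

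The main obstacle I anticipate is bookkeeping the decomposition of $\frac{d\Delta}{dt}$ cleanly: $\Delta = \|\bm c_{\bm X} - \bm c_{\bm Y}\|$ is a norm, so differentiating it introduces the unit vector $(\bm c_{\bm X} - \bm c_{\bm Y})/\Delta$, and one must argue that the dominant contribution aligns with the $\gamma$-direction (the last coordinate in the reparameterization), while the remaining in-plane contributions from $\mH_X, \mH_Y$ are either nonnegative or small after concentration. Getting the constants to line up exactly with $\xi_1, \xi_2$ and the precise $2e^a$ versus $(e^a - e^{-a})$ bounds requires care about which Hoeffding range applies to which sum, but this is routine once the gradient formula and the spherical-integral identities are in place.
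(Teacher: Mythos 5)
Your approach is essentially the same as the paper's: compute $\left.\tfrac{d\gamma}{dt}\right|_{t=0}$ explicitly from the gradient flow \eqref{eq:grad_flow_2}, observe that the entries of $\mZ=\mH_X(0)\mH_Y(0)^\top$ have the spherical inner-product density $f_Z(z)\propto(1-z^2)^{(d-3)/2}$, recognize $\xi_1,\xi_2$ as the Bessel moments $\E[z e^{az}]$ and $\E[e^{az}]$, and then apply one-sided Hoeffding inequalities (with a union bound over $4n+1$ events: $2n$ numerator sums, $2n$ denominator sums, plus the diagonal sum) to the empirical averages inside the expression for $\tfrac{d\gamma}{dt}$. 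Two small corrections to your bookkeeping, though. First, the $-2\epsilon$ in \eqref{eq:repulse} does \emph{not} arise from controlling the in-plane fluctuation of $\vc_X-\vc_Y$: after expanding $g_a(\mZ)$, the formula for $\left.\tfrac{d\gamma}{dt}\right|_{t=0}$ contains the term $-\tfrac{2}{n}\sum_j\mZ_{jj}$, and the $-2\epsilon$ is exactly the Hoeffding deviation bound $\tfrac{1}{n}\sum_j\mZ_{jj}\le 2\epsilon$ applied to that diagonal sum (with the factor $2$ coming from the range $[-1,1]$ of the inner products). Second, the paper does not perform the careful decomposition of $\tfrac{d}{dt}\|\vc_X-\vc_Y\|$ that you anticipate; it simply invokes $\Delta\ge 2\gamma$ to pass to $\tfrac{d\Delta}{dt}\ge 2\tfrac{d\gamma}{dt}$, which is where the overall prefactor $4\beta_0\gamma_0=2\cdot(2\beta_0\gamma_0)$ comes from. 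Your instinct that this reduction needs more care than a bare appeal to $\Delta\ge 2\gamma$ (since $\Delta(0)>2\gamma_0$ generically at random init) is a fair concern, but the paper's proof takes the short route.
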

The proof of \Cref{thm:2} is given in \Cref{app:thm2}. In addition to the parallel modalities assumption, we assume that embeddings are uniformly distributed on the hypersphere. This aligns with practice, as the final-layer linear projections of $f_\vtheta$ and $g_\vphi$ are typically initialized from a zero-mean isotropic Gaussian distribution, resulting in normalized features uniformly spread over the hypersphere. Next, we discuss \Cref{thm:2} in the following:
\begin{itemize}[leftmargin=*]
    \item \textbf{Interpretation.} As the number of training samples $n$ is very large in practice, we can analyze the form of \eqref{eq:repulse} in the limit $n \rightarrow \infty$, which gives 
    \begin{equation}\label{eq:repulse_asymptotic}
        \left.\frac{d\Delta}{dt}\right|_{t=0} \geq 4\beta_0\gamma_0 \frac{\xi_1}{\xi_2} = 4\beta_0\gamma_0\left( \frac{I_{\rho-1}(a)}{I_\rho(a)} - \frac{2\rho}{a} \right)
    \end{equation}
    almost surely. From the recurrence relation $I_{\rho-1}(a) = I_{\rho+1}(a) + (2\rho/a)I_\rho(a)$ in \cite[(3.1.1)]{magnus1967formulas} and the fact that $I_\rho(z) > 0$ for real order $\rho$ and $z > 0$, we have
    \begin{equation*}
        \frac{I_{\rho-1}(a)}{I_\rho(a)} - \frac{2\rho}{a} = \frac{I_{\rho+1}(a)}{I_\rho(a)} > 0
    \end{equation*}
    so $d\Delta/dt > 0$ at $t=0$, i.e., modality gap enlarges initially. Moreover, we can write \eqref{eq:repulse_asymptotic} in terms of elementary functions in the special case $d=3$. From $\rho = 1/2$, we have
    \begin{align*}
        \left.\frac{d\Delta}{dt}\right|_{t=0} \geq 4\beta_0\gamma_0 \left( \frac{I_{-1/2}(a)}{I_{1/2}(a)} - \frac{1}{a} \right) = 4\beta_0\gamma_0 \left( \frac{\cosh(\beta_0(1-\gamma_0^2))}{\sinh(\beta_0(1-\gamma_0^2))} - \frac{1}{\beta_0(1-\gamma_0^2)} \right)
    \end{align*}
    from closed-form expressions for half odd integer order Bessel functions \cite[(3.3)]{magnus1967formulas}. For $\gamma_0 = \Theta(1)$, this gives $d\Delta/dt \geq \Theta(\beta_0 \tanh(\beta_0)) \sim \beta_0$. As such, a large initial inverse temperature (which is used in practice) encourages a large increase in $\Delta$ at initialization.
    \item \textbf{Experimental verification.} We verify that $d\Delta/dt > 0$ at initialization as implied by \Cref{thm:1} in practice via randomly initializing CLIP models and recording the change in modality gap after the first 10 steps. We measure the change in modality gap with 100 independent trials and present the distribution of changes in \Cref{fig:thm2_verify}. From \Cref{fig:thm2_verify}, we can see modality gap increases at the start of training for all random initialization, with the mean being near $0.15$. 
\end{itemize}





\section{Mitigating the Modality Gap}\label{experiments}
The analysis of learning dynamics in \Cref{sec:main} offers valuable insights into mitigating modality gap. It inspires us to design two types of methods for reducing modality gap: (\emph{i}) \textbf{Temperature Control}, where we propose new temperature scheduling rules for accelerating the convergence rate of modality gap, and (\emph{ii}) \textbf{Modality Swapping}, we proposed methods to manually break the parallel constraints of two modalities by swapping them during training. To evaluate these approaches, we train models from scratch on the MSCOCO dataset with different variants of the proposed methods and then measure modality gap and evaluate the performance on downstream tasks.
In the following, we introduce the proposed methods in \Cref{mainpaper_method_detail} and discuss the results and implications of reducing modality gap in \Cref{subsec:exp-results}.

\begin{figure}[t]
    \centering
    \includegraphics[width=.8\linewidth]{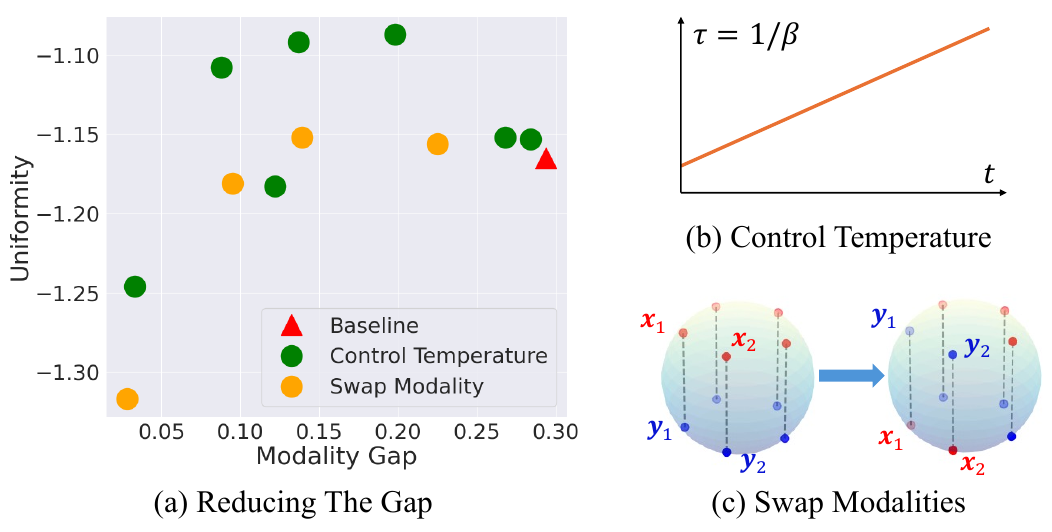}
    \caption{\textbf{We propose two categories of methods: Control Temperature and Swap Modality.} (a) Our methods reduce modality gap and may influence the uniformity of the feature space. (b) \textbf{Control Temperature} maintains the temperature at larger values such as increasing temperature across training. (c) \textbf{Swap Modalities} swaps between image and text feature pairs.}
    \label{fig:gap_uniform}
\end{figure}

\subsection{Methods}\label{mainpaper_method_detail}

We introduce the main idea of each method, and leave details on implementations to \Cref{appendix_method_detail}.

\paragraph{Temperature Control.} 

Our first type of method involves the control of temperature $\tau(\nu)$ during training. As defined in \eqref{eq:clip}, $\beta(\nu) = {1}/{\tau(\nu)} = \exp(\nu)$ is the temperature parameterization. Usually, $\tau(\nu)$ is the so-called \emph{temperature} in CLIP loss \cite{simclr, radford2021learning}. Thus, we refer to $\tau(\nu)$ as \emph{temperature} when we introduce the following methods. As shown by analysis in \Cref{subsec:stable} and \Cref{app:temperature}, the learnable temperature parameter is the key factor causing the stabilization of the modality gap. More specifically, the fast decrease in temperature outpaces the decrease in the modality gap, preventing it from closing. The following methods all counteract this decrease:
\begin{itemize}[leftmargin=*]
    \item Temperature Scheduling (TS): Instead of allowing $\tau(\nu)$ to diminish freely with $\nu$ (as shown in \Cref{fig:gap_uniform}(b)), we enforce a linearly increasing schedule for $\tau$ during training.
    \item Temperature Reparameterization (TR): As mentioned in \Cref{subsec:stable}, the specific parameterization of temperature also affects the gap closing rate. We replace the original parameterization, $1/\tau(\nu) = \beta(\nu) = \exp(\nu)$, with alternatives that yield a higher gap closing rate, such as $1/\tau(\nu) = \log(1 + e^{\nu})$.
    \item Smaller Learning Rate of Temperature (SLRT): We use a smaller learning rate for the temperature parameter compared to other learnable parameters. This slows down the decrease in temperature, allowing larger temperature values to be maintained during training.
    \item Fixed on Large Temperature (FLT): Rather than allowing the temperature to be freely learned, we fix $\tau$ at a high value throughout the training process.
\end{itemize}

\paragraph{Modality Swapping.} Our second approach involves manually mixing the two modalities by swapping pairs, as illustrated in \Cref{fig:gap_uniform}c. This mixing prevents the two modalities from remaining segregated into parallel planes. Consequently, the repulsion caused by mismatched pairs no longer occurs between the original planes, thereby reducing the overall repulsion between them. We introduce two strategies for swapping pairs—hard swapping and soft swapping—both of which effectively mitigate modality gap.

\begin{itemize}[leftmargin=*]
    \item Hard Swapping Between Modalities (HS). During training, we randomly select some images and their paired text descriptions. Then, we exchange the features of these images and the paired texts in the shared feature space, as visualized in  \Cref{fig:gap_uniform}c. 
    \item Soft Swapping Between Modalities (SS). Different from hard swapping, for a pair of image and text features, soft swapping mixes the two features to create a pair of image and text features.
\end{itemize}


\subsection{Experimental Results and Implications}\label{subsec:exp-results}

\begin{figure}[h]
    \centering

    \begin{subfigure}[t]{0.45\textwidth}
        \centering
        \includegraphics[width=\linewidth]{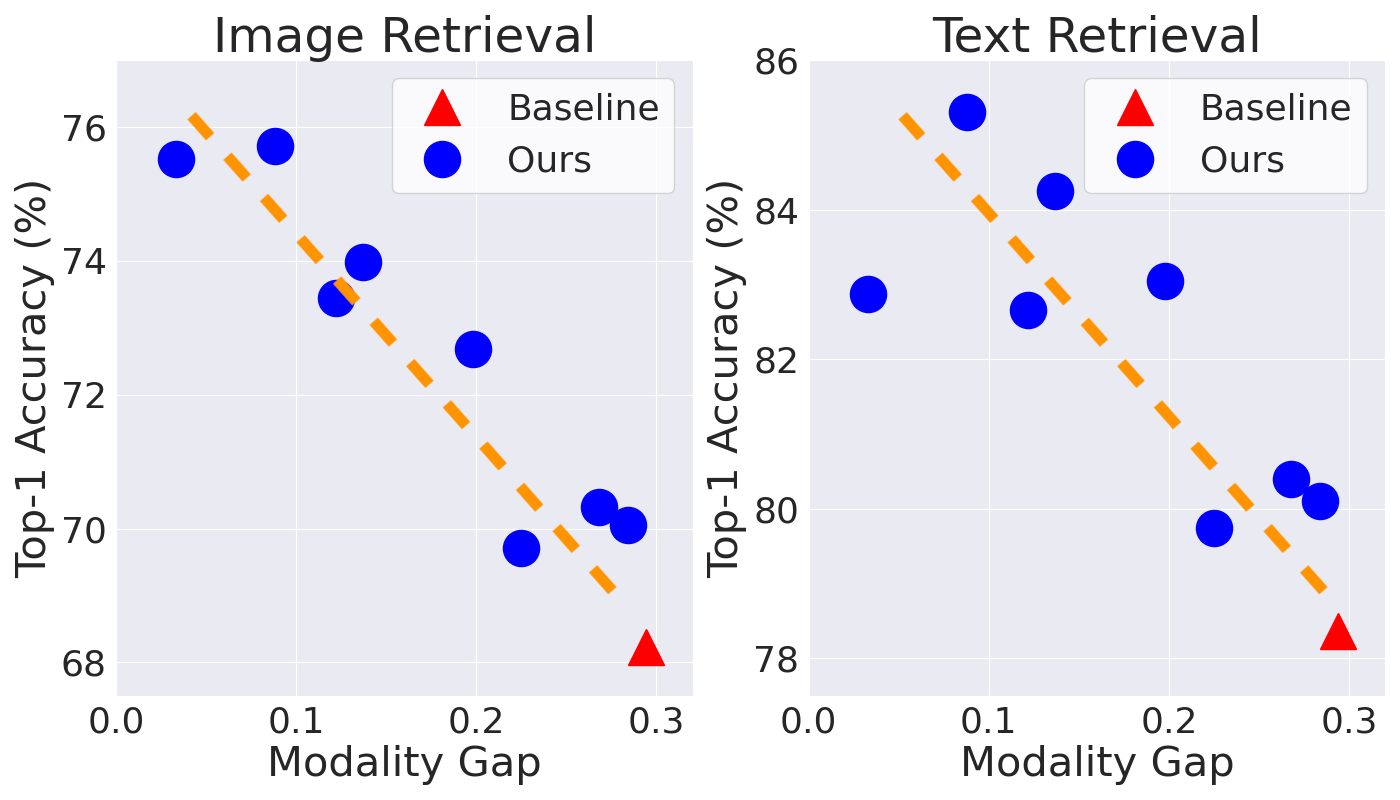}
        \caption{Retrieval accuracies vs. modality gap.}
        \label{fig:retrieval_gap}
    \end{subfigure}%
    \begin{subfigure}[t]{0.45\textwidth}
        \centering
        \includegraphics[width=\linewidth]{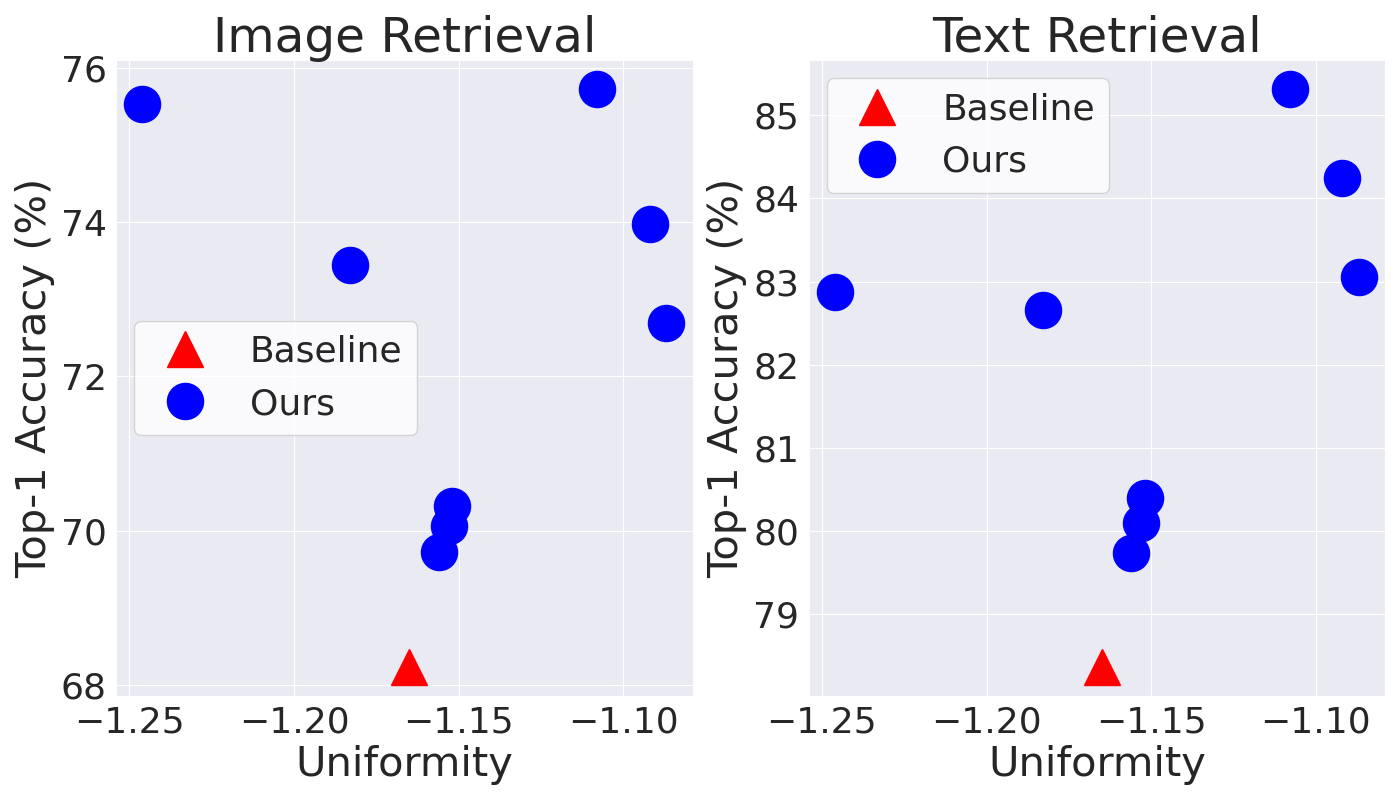}
        \caption{Retrieval accuracies vs. uniformity.}
        \label{fig:retrieval_uniform}
    \end{subfigure}
    \caption{\textbf{Image-text retrieval accuracy increases with reduced modality gap by our proposed methods.} Uniformity does not show strong correlations with retrieval accuracies.}
    \label{fig:retrieval}
\end{figure}

\paragraph{Experimental setup.} We train CLIP models from scratch on MSCOCO using the proposed methods and the original CLIP training process as a baseline. After pretraining, we evaluate two key attributes of the shared feature space: (\emph{i}) modality gap between image and text features and (\emph{ii}) feature space uniformity. Additionally, we assess the models on four downstream tasks: (\emph{i}) zero-shot image classification, (\emph{ii}) linear-probe image classification, (\emph{iii}) image-text retrieval, and (\emph{iv}) vision-language question answering (MMVP-VLM \cite{Tong2024EyesWS}). For each training setting, we independently train three models and report averaged results for attributes and task performance. Detailed experimental setup is provided in \Cref{appendix:exp}.

\paragraph{Closing modality gap is especially helpful for image-text retrievals.} We visualize the correlation between image-text retrieval accuracies and modality gap in \Cref{fig:retrieval_gap}, and the correlation between image-text retrieval accuracies and uniformity in \Cref{fig:retrieval_uniform}. As shown in the figure, though these methods are different variants of Control Temperature and Swap Modality, a smaller modality gap clearly leads to a higher retrieval accuracy. This indicates reducing modality gap improves image-text retrieval. In contrast, uniformity does not show a strong correlation with the retrieval accuracy. We include more detailed results for each method variant in \Cref{appendix:more_results}.

\begin{figure}[h]
    \centering
    \begin{subfigure}[t]{0.45\textwidth}
        \centering
        \includegraphics[width=\linewidth]{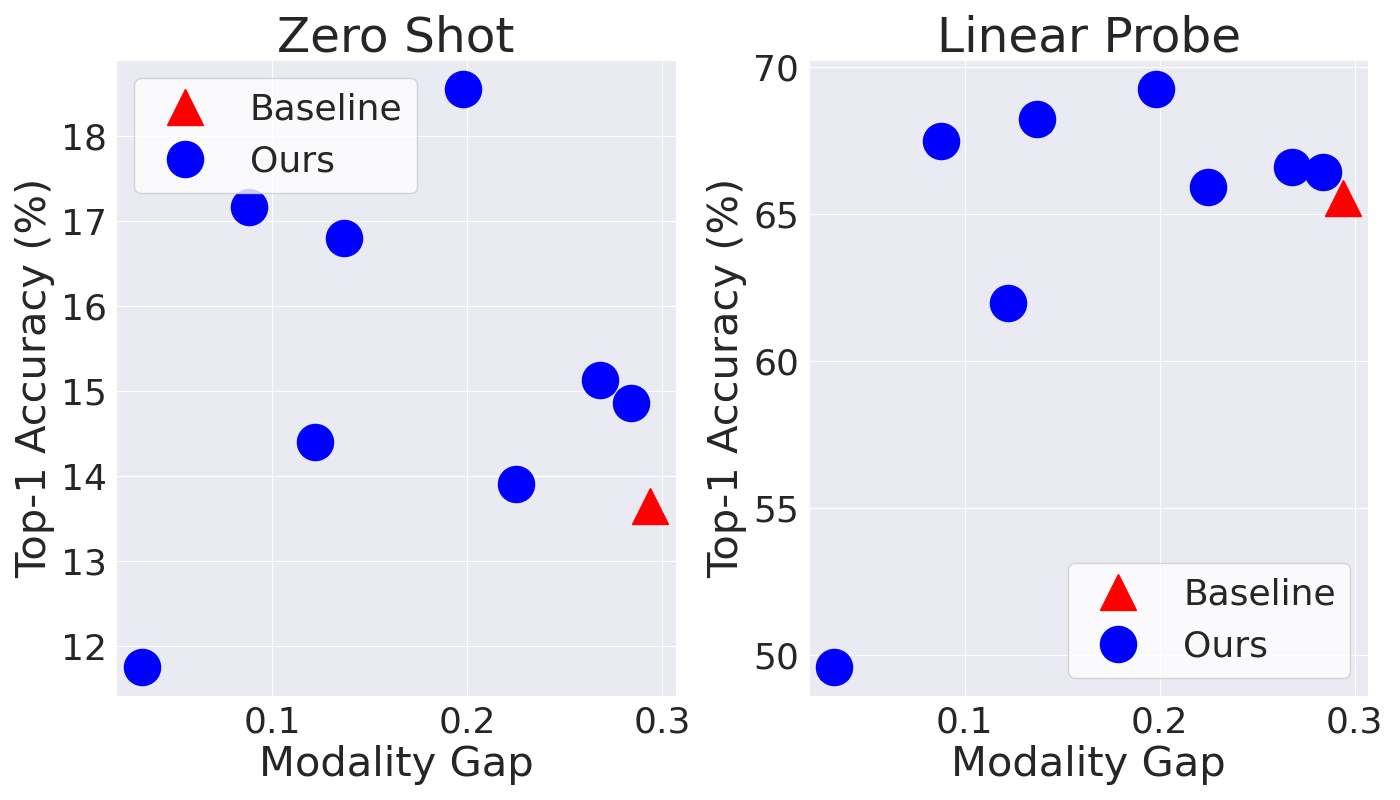}
        \caption{Zero shot/linear probe vs. modality gap.}
        \label{fig:classify_gap}
    \end{subfigure}%
    \begin{subfigure}[t]{0.45\textwidth}
        \centering
        \includegraphics[width=\linewidth]{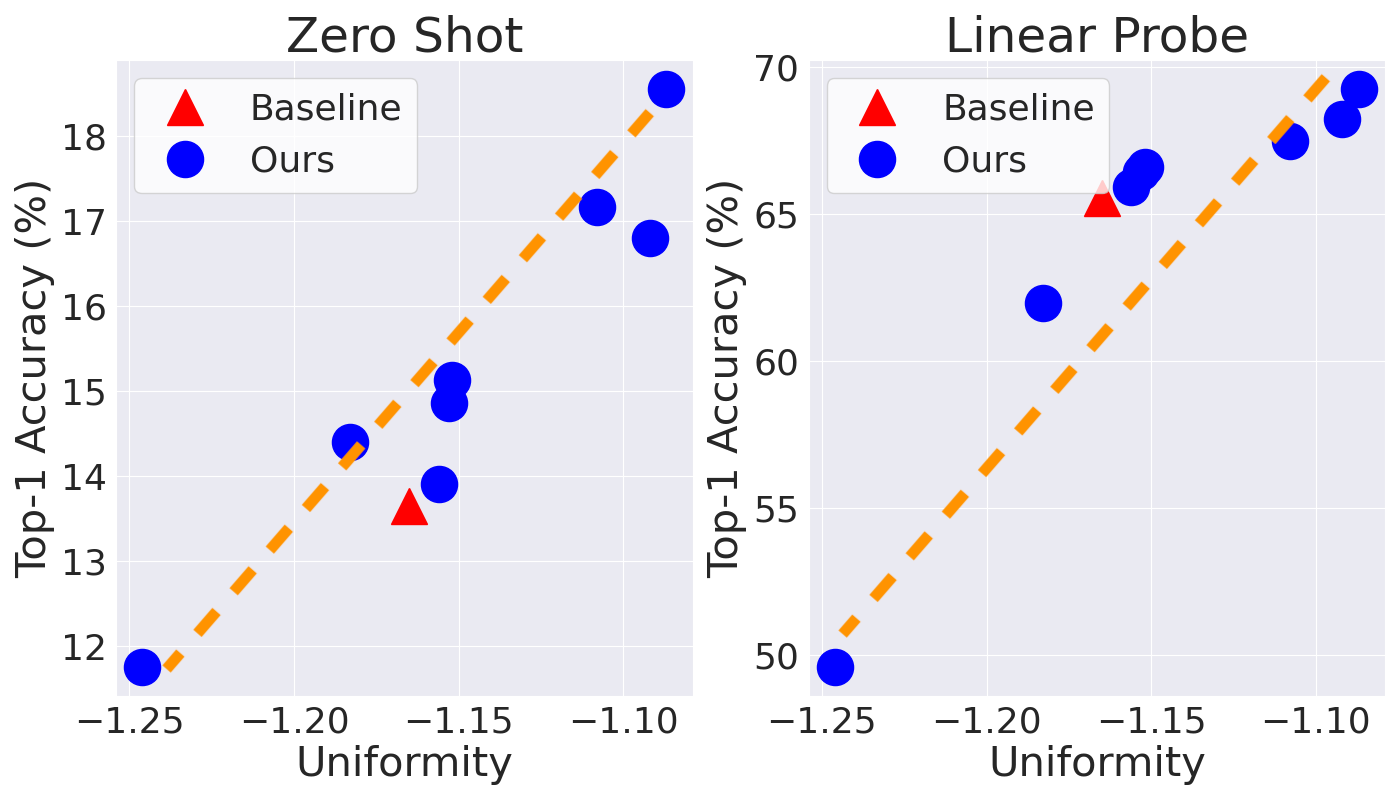}
        \caption{Zero shot/linear probe vs. uniformity.}
        \label{fig:classify_uniform}
    \end{subfigure}
    \caption{\textbf{Zero-shot and linear probe accuracies are not strongly correlated with modality gap.} Uniformity has a clear positive correlation with zero-shot and linear probe accuracies instead.}
    \label{fig:zero_linear}
\end{figure}

\begin{table*}[h]
\small
\centering
\begin{tabular}{@{}cccccccc@{}}
\toprule
& \multirow{2}{*}{Baseline} & \multicolumn{3}{c}{FLT} & \multicolumn{3}{c}{TS} \\ \cmidrule(l){3-8} 
&                               &  $4\cdot 10^{-2}$     & $7\cdot 10^{-2}$    & $10^{-1}$    & S1        & S2       & S3             \\ \midrule
Modality Gap ($\downarrow$) & 0.294                      & 0.122    & 0.033   & \textbf{0.019}      & 0.198    & 0.137    & 0.088    \\ 
Uniformity ($\uparrow$)      & -1.165                    & -1.183   & -1.246  & -1.251    & \textbf{-1.087}   & -1.092   & -1.108   \\ \midrule
MMVP-VLM ($\uparrow$)            & 14.321                     & 11.852   & 14.074  & \textbf{14.817}   & 14.074   & 14.321   & 13.087   \\ \bottomrule
\end{tabular}
\caption{\textbf{MMVP-VLM accuracy does not strongly correlate with modality gap or uniformity.}}
\label{tab:implication}
\end{table*}


\paragraph{Closing modality gap is not all you need.} Even though methods reducing modality gap increase retrieval performance, it does not mean modality gap is the only metric we should care about when characterizing the space. For example, as shown in \Cref{fig:zero_linear}, bad uniformity will harm zero-shot and linear probe accuracies while a reduced modality gap does not have an obvious effect on them. Moreover, for the difficult vision-language question-answering task MMVP-VLM, we test different variants of Fixed on Large Temperature (FLT) and Temperature Scheduling (TS), and neither modality gap nor uniformity shows a strong correlation with the MMVP-VLM performance. We discuss the detailed settings of different variants in \Cref{appendix_method_detail} and \Cref{appendix:more_results}


\section{Conclusion}
In this work, we investigated the modality gap in training multimodal models. By analyzing gradient flow learning dynamics, we theoretically characterized how learning temperature and mismatched pairs influence this gap. Based on our analysis, we proposed principled methods to control temperature and swap information between modalities to reduce the gap, which also improves downstream task performance—particularly in retrieval tasks. Our work opens up future directions, such as extending gradient flow analysis to study the difficulty of closing the gap with varying levels of shared information between modalities by modeling data distributions. Additionally, our results can provide insights into finetuning scenarios where domain differences between pretraining and finetuning data need to be considered. Motivated by recent advancements in understanding the compressibility of gradient dynamics through low-dimensional structures in data \citep{yaras2023law,yaras2024compressible,kwon2024efficient}, a promising future direction is to investigate the compressibility of the Riemannian dynamics trajectory in multimodal learning to enhance the efficiency and performance of multimodal models.

\section*{Acknowledgment}
We acknowledge support from NSF CAREER CCF-2143904, NSF IIS 2312842, NSF IIS 2402950, and ONR N00014-22-1-2529. Additionally, we acknowledge fruitful discussions with Yuexiang Zhai (UC Berkeley) and Liyue Shen (UMich).



\IfFileExists{main.bib}{\bibliography{main}}{\bibliography{main}}

\pagebreak
\appendix


\section{Related Works}
\label{appendix:work}
\paragraph{Contrastive Learning}
In the past, contrastive learning has been demonstrated as a powerful tool to learn reasonable representations in a self-supervised way \citep{Chen2020ExploringSS, simclr, moco, swav}. Representative contrastive methods including SimCLR \citep{simclr}, MOCO \citep{moco}, and SwAV \citep{swav}. Intuitively, contrastive learning aims to pair together data that contain similar information, and separate pairs that contain distinct information. Such a strategy can effectively extract key features within data that are useful for various downstream tasks. For example, by utilizing contrastive learning in images or videos, we can get image or video encoders that can perform zero-shot image and video retrieval, as well as image classification \cite{simclr, moco, swav}, action detection, action segmantation, and other tasks \citep{rbyol, vididi}. Moreover, contrastive learning can be applied to a wide range of data, from language to protein \citep{languagecont, proteincont}. Apart from application, other works approach to analyze and understand contrastive learning from the perspective of alignment \& uniformity \citep{alignuni, fang2023rethinking}, specific design choices \citep{xue2024investigating, gupta2022understandingimprovingroleprojection}, and so on \citep{Hua2021OnFD, Arora2019ATA}.

\paragraph{Multimodal Learning}
Multimodal learning aims to learn a shared representation space for different modalities such as images and texts, where the learned representation space can express the shared information hidden in different modalities \citep{mmchaojia, radford2021learning, mmfeifei}. Under this context, contrastive learning has been proved applicable and powerful to multimodal learning \citep{radford2021learning}. The contrastive loss is applied between two modalities (take image and text as an example) such that positive pairs (i.e., image and text pairs that contain similar information) have similar representations but negative pairs (i.e., image and text pairs that contain distinct information) are apart from each other. The pretrained multimodal models excel in various downstream tasks such as linear probe, zero-shot classification, and retrieval \citep{radford2021learning}, and are even useful for medical applications \citep{Zhang2020ContrastiveLO} and generative models \citep{stablediffusionv3, LCM, DeepFloydIF}. Despite its great power, recently, several works have also identified bias and flaws in the learned multimodal models, such as uni-model bias \citep{Bravo_2023_CVPR}, and failure in understanding detailed information \citep{Tong2024EyesWS}, which add difficulty to applications such as text-to-image editing \citep{clippae, chen2024exploring}. Therefore, the theoretical understanding of contrastive multimodal learning is an important problem for further solving the aforementioned problems.

\paragraph{Modality Gap}
Following prior discoveries, existing works delve deeper into the modality gap from both theoretical and empirical perspectives. For instance, \cite{shi2023towards} experimentally demonstrates the influence of different initialization and temperature parameters in the modality gap with toy datasets, but stops short of providing theoretical justifications for the emergence of modality gap under various conditions. \cite{schrodi2024effectstriggermodalitygap} investigates the role of information imbalance between modalities, suggesting that balancing information between text and image datasets helps close the modality gap. However, their claim is based solely on experiments with toy datasets and lacks a strong theoretical foundation. Instead of focusing on the reasons behind the modality gap, other studies propose practical solutions. For example, \cite{geodesic} and \cite{fahim2024itsmodalitygapcharacterizing} introduce new loss functions for fine-tuning the CLIP model to close the gap, while \cite{2024mitigategap} encourages different modalities to share portions of their encoders. Although effective, they remain largely experimental and fail to provide a rigorous explanation for modality gap's origin.





\section{Proofs}\label{appendix:proof}

\subsection{Preliminaries}\label{appen:setup}

The gradient of $\ell$ given in \eqref{eq:ell} is given by
\begin{align*}
     \nabla \ell(\mM) = \frac{1}{2n}\left(\frac{\exp(\mM)}{\bm 1_n \bm 1_n^\top \exp(\mM)} + \frac{\exp(\mM)}{\exp(\mM) \bm 1_n \bm 1_n^\top} \right) - \frac{1}{n} \bm I_n 
\end{align*}
where division and exponentiation are carried out element-wise. \\~\\
Define the function $g_a(\mZ) := -\langle \mZ, \nabla \ell(a\mZ) \rangle$ which can be written as
\begin{align*}
    g_a(\mZ) = \frac{1}{2n} \left(\sum_{i=1}^n \mu_i(\mZ_{:, i}, a) + \mu_i(\mZ_{i, :}, a) \right)
\end{align*}
where 
\begin{align}\label{eq:mu}
    \mu_i(\vz, a) = \langle \vz, \ve^{(i)} - \sigma(a \vz) \rangle
\end{align}
and $\sigma$ is the softmax function $\sigma(\vm) = \exp(\vm) / \sum_j \exp(\vm_j)$. Then we have
\begin{align*}
    \frac{\partial \ell}{\partial \nu} = -\beta'(\nu) (1-\gamma^2) g_{\beta(\nu)(1-\gamma^2)}(\mZ), \quad
    \frac{\partial \ell}{\partial \gamma} = 2\beta(\nu) \gamma g_{\beta(\nu)(1-\gamma^2)}(\mZ)
\end{align*}
which gives the gradient flow dynamics
\begin{align}\label{eq:grad_flow_2}
    \frac{d\beta}{dt} = (\beta')^2 (1-\gamma^2) g_{\beta(1-\gamma^2)}(\mZ), \quad
    \frac{d\gamma}{dt} = -2\beta\gamma g_{\beta(1-\gamma^2)}(\mZ)
\end{align}
where $\beta' := \beta'(\nu)$.

\subsection{Proof of \Cref{lem:1}}\label{app:lem1}
\begin{proof}
From \eqref{eq:grad_flow_2}, we have
\begin{equation*}
    \frac{1}{2\beta\gamma}\left(-\frac{d\gamma}{dt}\right) = \frac{1}{(\beta')^2 (1-\gamma^2)}\frac{d\beta}{dt} \implies R = \frac{d\gamma/dt}{d\beta/dt} = - \frac{2\beta\gamma}{(\beta')^2(1-\gamma^2)}
\end{equation*}
which gives \eqref{eq:coupling}. Now let $\beta = \exp(\nu)$ so $\beta' = \beta$. Then we can rewrite and integrate as
\begin{align*}
    \int_{\beta_0}^\beta \frac{d\beta}{\beta} = -\frac{1}{2}\int_{\gamma_0}^\gamma \left(\frac{1}{\gamma} - \gamma \right)\, d\gamma \implies \beta(\gamma) = \beta_0 \sqrt{\frac{\gamma_0}{\gamma}} \exp\left(\frac{\gamma^2 - \gamma_0^2}{4}\right)
\end{align*}
which gives \eqref{eq:beta_gamma}.
\end{proof}
\subsection{Proof of \Cref{thm:1}}\label{app:thm1}

\begin{lemma}\label{lemma:mu}
Let $\vz \in \R^n$ and $i \in [n]$.
    \begin{enumerate}[label=(\alph*)]
        \item If $\vz_i \geq \max_j \vz_j$, then $\mu_i(\vz, a) \geq 0$ for all $a \geq 0$.
        \item For any $\alpha > 0$, we have
        \begin{align*}
            \mu_i(\vz, a) \leq \mu_i(\alpha(\ve^{(i)} - \bm 1), a)
        \end{align*}
        for all $a \geq \log(4n-4) / \alpha$ and for all $\vz$ such that $\vz_i - \max_{j \neq i}\, \vz_j \geq \alpha$.
    \end{enumerate}
\end{lemma}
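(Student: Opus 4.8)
\textbf{Proof proposal for Lemma \ref{lemma:mu}.}

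The plan is to work directly from the definition $\mu_i(\vz, a) = \langle \vz, \ve^{(i)} - \sigma(a\vz)\rangle = \vz_i - \sum_{j=1}^n \vz_j \sigma(a\vz)_j$, exploiting the fact that $\sigma(a\vz)$ is a probability vector. For part (a), I would rewrite $\mu_i(\vz, a) = \sum_j (\vz_i - \vz_j)\sigma(a\vz)_j$ using $\sum_j \sigma(a\vz)_j = 1$. If $\vz_i \geq \max_j \vz_j$, then every term $\vz_i - \vz_j \geq 0$ and each $\sigma(a\vz)_j \geq 0$, so the sum is nonnegative for all $a \geq 0$. This part is immediate.

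For part (b), the key observation is a monotonicity property: among vectors $\vz$ with a fixed gap condition $\vz_i - \max_{j\neq i}\vz_j \geq \alpha$, the quantity $\mu_i(\vz, a)$ should be maximized (in the regime where $a$ is large enough) by the ``extremal'' configuration $\vz = \alpha(\ve^{(i)} - \bm 1)$, i.e. $\vz_i = 0$ (after an irrelevant shift, since $\mu_i$ is not shift-invariant — careful here) and $\vz_j = -\alpha$ for $j \neq i$. First I would note that $\mu_i$ is invariant under adding a constant to all coordinates only if... actually it is not, since $\langle \vz, \ve^{(i)} - \sigma(a\vz)\rangle$ with $\vz \to \vz + c\bm 1$ changes by $c(1 - \sum_j \sigma(a\vz)_j) = 0$ — so $\mu_i$ \emph{is} shift-invariant because $\ve^{(i)} - \sigma(a\vz)$ sums to zero and $\sigma(a(\vz+c\bm 1)) = \sigma(a\vz)$. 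Good. So WLOG we may normalize $\vz_i = 0$, in which case the gap condition reads $\vz_j \leq -\alpha$ for all $j \neq i$, and $\mu_i(\vz,a) = -\sum_{j\neq i}\vz_j\sigma(a\vz)_j = \sum_{j\neq i}|\vz_j|\sigma(a\vz)_j$ with $\sigma(a\vz)_j = e^{a\vz_j}/(1 + \sum_{k\neq i}e^{a\vz_k})$.

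The main obstacle is establishing that this expression, as a function of the free variables $\{\vz_j\}_{j\neq i} \in (-\infty, -\alpha]^{n-1}$, attains its supremum at $\vz_j = -\alpha$ for all $j$, \emph{provided} $a$ is large. The intuition: the function $t \mapsto t e^{-at}$ (for $t = |\vz_j| \geq \alpha$) is decreasing once $at > 1$, i.e. pushing $\vz_j$ more negative shrinks its contribution once $a\alpha > 1$; but one must also control the shared denominator, which only decreases as the $\vz_j$ move away from $-\alpha$, and a smaller denominator increases $\mu_i$. So the argument is a trade-off, and the threshold $a \geq \log(4n-4)/\alpha$ is what makes the numerator effect dominate. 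Concretely I would bound: for any admissible $\vz$ (normalized so $\vz_i = 0$), $\mu_i(\vz,a) = \sum_{j\neq i}|\vz_j|e^{a\vz_j} / (1 + \sum_{k\neq i}e^{a\vz_k}) \leq \sum_{j\neq i}|\vz_j|e^{a\vz_j}$ (dropping the $1$ only helps if... no — dropping the $+1$ \emph{increases} the bound, wrong direction). Instead: $\mu_i(\vz,a) \leq \sum_{j\neq i}|\vz_j|e^{a\vz_j}$ since the denominator is $\geq 1$. Each summand $|\vz_j|e^{a\vz_j} = s e^{-as}$ with $s \geq \alpha$; since $s\mapsto se^{-as}$ is decreasing for $s \geq 1/a$ and $\alpha \geq 1/a$ holds once $a \geq 1/\alpha$ (implied by the hypothesis since $\log(4n-4)\geq \log 4 > 1$), we get $|\vz_j|e^{a\vz_j}\leq \alpha e^{-a\alpha}$, hence $\mu_i(\vz,a) \leq (n-1)\alpha e^{-a\alpha}$. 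On the other side, for the extremal vector $\vz^\star = \alpha(\ve^{(i)}-\bm 1)$ we compute exactly: $\sigma(a\vz^\star)_i = 1/(1+(n-1)e^{-a\alpha})$ and $\sigma(a\vz^\star)_j = e^{-a\alpha}/(1+(n-1)e^{-a\alpha})$ for $j\neq i$, giving $\mu_i(\vz^\star,a) = 0\cdot\sigma_i - \sum_{j\neq i}(-\alpha)\sigma_j = (n-1)\alpha e^{-a\alpha}/(1+(n-1)e^{-a\alpha})$. Thus it suffices to show $(n-1)\alpha e^{-a\alpha} \leq (n-1)\alpha e^{-a\alpha}/(1+(n-1)e^{-a\alpha})$, which is false as stated — so the crude bound loses too much and I must instead keep the denominator. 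The fix: bound $\mu_i(\vz,a) \leq (n-1)\alpha e^{-a\alpha}/(1 + \text{something})$; more carefully, since each $e^{a\vz_j}\leq e^{-a\alpha}$, the denominator $1 + \sum_{k\neq i}e^{a\vz_k} \geq 1$, but we want a lower bound on the denominator that still beats $1 + (n-1)e^{-a\alpha}$... that fails when the $\vz_j$ are very negative. The resolution must be that when $\vz_j \to -\infty$ both numerator summand $|\vz_j|e^{a\vz_j}\to 0$ \emph{and} its denominator contribution $\to 0$, so the right comparison is term-by-term against the value $se^{-as}/(1 + (n-1)e^{-a\alpha})$; establishing $se^{-as}/(1 + (n-1)se^{-as}\cdot\frac{1}{\alpha}\cdots)$ — this is getting delicate. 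I expect the actual argument to proceed by showing the partial derivative $\partial\mu_i/\partial\vz_j \leq 0$ on the admissible region when $a\geq\log(4n-4)/\alpha$ (a one-variable calculus computation: $\partial\mu_i/\partial\vz_j = \sigma_j(1 + a\vz_j - a\mu_i)$ roughly, and one checks the bracket is $\leq 0$ using $\vz_j\leq-\alpha$, $\mu_i\geq 0$ from part (a), and $a\alpha\geq\log(4n-4)\geq 1$), so that decreasing any $\vz_j$ toward $-\alpha$ from above — wait, we need to push \emph{toward} $-\alpha$, i.e. increase $\vz_j$ up to $-\alpha$ — monotonicity gives $\mu_i(\vz,a)\leq\mu_i(\vz^\star,a)$. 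I would carry out that derivative computation as the technical core, verifying the sign via the threshold on $a$; the factor $4n-4$ rather than $n-1$ presumably comes from slack needed to absorb the $a\vz_j - a\mu_i$ and softmax-denominator terms in that inequality.
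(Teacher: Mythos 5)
Part (a) of your proposal is correct and matches the paper's argument up to a cosmetic rewrite: both reduce to the fact that $\sigma(a\vz)$ is a probability vector and that $\vz_i$ dominates every coordinate, so $\vz_i - \langle \vz, \sigma(a\vz)\rangle \geq 0$.

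For part (b), you correctly identify the paper's overall route — exploit shift-invariance to set $\vz_i = 0$, observe that the admissible region is $\{\vz_j \leq -\alpha,\; j \neq i\}$, and prove that $\mu_i$ is monotone in each $\vz_j$ on that region so the supremum sits at the corner $\vz = \alpha(\ve^{(i)} - \bm 1)$. Your initial ``drop the denominator'' attempt and the self-correction when you noticed it loses too much are the right instincts, and you are right that the $\log(4n-4)$ threshold is the slack that makes the derivative computation go through.

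However, the sketch of the decisive step contains a sign error that, if carried out, would derail the proof. The correct partial derivative is
\begin{align*}
\frac{\partial \mu_i}{\partial \vz_k} = -\sigma_k(a\vz)\bigl(1 + a\vz_k + a\mu_i(\vz,a)\bigr),
\end{align*}
not $\sigma_k(1 + a\vz_k - a\mu_i)$. To get the needed monotonicity $\partial\mu_i/\partial\vz_k \geq 0$ one must show the bracket $1 + a\vz_k + a\mu_i \leq 0$. Since $\mu_i$ enters with a \emph{plus} sign, the lower bound $\mu_i \geq 0$ from part (a) works \emph{against} you here — it does not help at all. What is actually required is an \emph{upper} bound $\mu_i \leq \alpha - 1/a$, and establishing it is the whole point of the hypothesis $a \geq \log(4n-4)/\alpha$: the paper lower-bounds $a\psi = -a\mu_1$ via $a\psi \geq \sum_j a\vw_j e^{a\vw_j} \geq -(n-1)a\alpha e^{-a\alpha}$, then uses $(n-1)e^{-a\alpha} \leq 1 - 1/(a\alpha)$ (which is exactly where $4n-4$ arises) to conclude $a\psi \geq 1 - a\alpha \geq 1 + a\vw_k$. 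Your proposal has the right skeleton but misidentifies the role of part (a) and the sign structure of the key inequality, which is precisely where the proof's content lives.
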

\begin{proof}
We have
\begin{align*}
    \langle \vz, \ve^{(i)} \rangle = \vz_i \geq \max_j \vz_j = \sum_l \sigma_l(a\vz) \left(\max_j \vz_j\right) \geq \sum_l \sigma_l(a\vz) \vz_l = \langle \vz, \sigma(a\vz) \rangle
\end{align*}
so $\mu_i(\vz, a) \geq 0$, giving (a). \\~\\
To prove (b), without loss of generality, we can take $i=1$ and assume $\vz_1 = 0$ and $\vz_j \leq -\alpha$ for $j \neq 1$. Define $\vw = \vz_{2:n} \in \R^{n-1}$ and
\begin{align*}
    \psi(\vw, a) := \frac{\sum_j \vw_j \exp(a\vw_j)}{1 + \sum_j \exp(a\vw_j)}
\end{align*}
so that $\mu_1(\vz, a) = -\psi(\vw, a)$. The statement is then equivalent to showing $\psi(\vw, a) \geq \psi(-\alpha \bm 1, a)$ for any $\vw$ such that $\vw_j \leq -\alpha$ for all $j \in [n-1]$. It suffices to show that for any $k$, we have $\partial \psi/\partial \vw_k \leq 0$ whenever $\vw \preceq -\alpha \bm 1$, i.e., $\psi$ is decreasing in any argument for the region bounded above by $-\alpha \bm 1$. We compute
\begin{align*}
    \frac{\partial \psi}{\partial \vw_k} = \frac{\exp(a\vw_k)}{1+\sum_j \exp(a\vw_j)}\left( 1 + a\vw_k - a \psi(\vw, a)\right)
\end{align*}
where
\begin{align*}
    a \psi(\vw, a) = \frac{\sum_j a\vw_j \exp(a\vw_j)}{1 + \sum_j \exp(a\vw_j)} \geq \sum_j a\vw_j \exp(a\vw_j) \geq -(n-1)a\alpha \exp(-a\alpha)
\end{align*}
provided that $\vw \preceq -\alpha \bm 1$, where the last inequality follows from $\alpha \geq 1/a$. Now, for any $x \geq \log(4n-4)$, we have that $(n-1) \leq \exp(x)(1-1/x)$ by $1-1/x \geq 1/4$. Therefore, $x=a\alpha \geq \log(4n-4)$ satisfies
\begin{align*}
    (n-1)\exp(-a\alpha) \leq 1 - \frac{1}{a\alpha} \implies -(n-1)a\alpha \exp(-a\alpha) \geq 1 - a\alpha.
\end{align*}
Combining the above inequalities along with $1+a\vw_k \leq 1-a\alpha$ yields $a\psi(\vw, a) \geq 1 + a\vw_k$, and thus ${\partial \psi}/{\partial \vw_k} \leq 0$, completing the proof.
\end{proof}
\begin{proof}[Proof of \Cref{thm:1}]
First, we note by \eqref{eq:grad_flow_2} that $\beta(t)$ is increasing in $t$ and $\gamma(t)$ is decreasing in $t$ due to the fact that $g_{\beta(1-\gamma^2)}(\mZ) \geq 0$ via perfect mismatch and (a) in \Cref{lemma:mu}.
Now, we can substitute the expression for $\beta$ in \eqref{eq:beta_gamma} into the dynamics of $\gamma$ in \eqref{eq:grad_flow_2} giving
\begin{align*}
    \frac{d\gamma}{dt} = -2 \beta(\gamma) \gamma g_{\beta(\gamma)(1-\gamma^2)}(\mZ).
\end{align*}
Since
\begin{align*}
    \beta(\gamma) (1-\gamma^2) \geq \beta_0 \exp(-\gamma_0^2/4)(1-\gamma_0^2)
\end{align*}
we have by $\beta_0 \geq \log(4n-4) / (\overline{\alpha} (1-\gamma_0^2))$, (b) in \Cref{lemma:mu}, and the fact that $\beta(t)$ and $\gamma(t)$ are increasing and decreasing in $t$ respectively that
\begin{align*}
    g_{\beta(\gamma)(1-\gamma^2)}(\mZ) &\leq \mu_1\left(\overline{\alpha} (\ve^{(1)} - \bm 1), \beta(\gamma)(1-\gamma^2)\right) \\
    &= 1 - \left(p_1 + (1-\overline{\alpha})(1-p_1)\right) = \overline{\alpha}(1-p_1)
\end{align*}
where $p_1$ is the first entry of $\sigmoid\left(\beta(\gamma)(1-\gamma^2)\overline{\alpha} (\ve^{(1)} - \bm 1)\right)$. We compute
\begin{align*}
    1-p_1 = \frac{1}{1+\exp(\beta(\gamma)(1-\gamma^2)\overline{\alpha})/(n-1)} \leq (n-1)\exp(-\beta(\gamma)(1-\gamma^2)\overline{\alpha})
\end{align*}
so overall we have 
\begin{align*}
    2\beta(\gamma)\gamma g_{\beta(1-\gamma^2)}(\mZ) &\leq 2\overline{\alpha}(n-1) \beta(\gamma)\gamma\exp(-\beta(\gamma)(1-\gamma^2)\overline{\alpha}) \\
    &\leq 2\overline{\alpha}(n-1) \beta_0 \sqrt{\frac{\gamma_0}{\gamma}} \gamma \exp\left(- \beta_0 \sqrt{\frac{\gamma_0}{\gamma}} \exp\left(-\frac{\gamma_0^2}{4}\right)(1-\gamma_0^2)\overline{\alpha}\right) 
\end{align*}
where the second inequality follows from $\gamma \leq \gamma_0$ and $\exp\left(-\gamma_0^2/4\right) \leq  \exp\left((\gamma^2 - \gamma_0^2)/4\right) \leq 1$. Let $c_1 = 2\overline{\alpha}(n-1)\beta_0\sqrt{\gamma_0}$ and $c_2 = \overline{\alpha}\beta_0 \sqrt{\gamma_0} (1-\gamma_0^2) \exp(-\gamma_0^2/4)$, and define a new problem
\begin{align*}
    \frac{d\Tilde{\gamma}}{dt} = -c_1\sqrt{\Tilde{\gamma}} \exp\left(-\frac{c_2}{\sqrt{\Tilde{\gamma}}}\right)
\end{align*}
with initial condition $\Tilde{\gamma}(0) = \gamma_0$. Since $d\Tilde{\gamma}/dt \leq d\gamma/dt$ whenever $\Tilde{\gamma} = \gamma$, we have that $\Tilde{\gamma}(t) \leq \gamma(t)$ for all $t\geq 0$. We make the change of variables $\omega = 1/\sqrt{\Tilde{\gamma}}$ which gives
\begin{align*}
    \frac{d\omega}{dt} = (1/2)c_1 \omega^2 \exp(-c_2 \omega)
\end{align*}
with $\omega(0) = 1 / \sqrt{\gamma_0}$. From the fact that $(1/55)x^2\exp(-x/10) \leq 1$ for all $x \geq 0$, we have that 
\begin{align*}
    (1/2)c_1 \omega^2 \exp(-c_2 \omega) \leq c_3 \exp(-c_4 \omega)
\end{align*}
where $c_3 = 55c_1/2$ and $c_4 = c_2 - 1/10$ so defining
\begin{align*}
    \frac{d\Tilde{\omega}}{dt} = c_3 \exp(-c_4 \Tilde{\omega})
\end{align*}
with $\Tilde{\omega}(0) = \omega(0) = 1/\sqrt{\gamma_0}$ satisfies $\Tilde{\omega}(t) \geq \omega(t)$ for all $t \geq 0$. Integrating, we have $\Tilde{\omega}(t) = (1/c_4)\log(c_3t + \exp(c_4/\sqrt{\gamma_0}))$, and combining all the above bounds yields
\begin{align*}
    \gamma(t) \geq \frac{c_2}{\log(c_3t + \exp(c_4/\sqrt{\gamma_0}))^2}
\end{align*}
for all $t \geq 0$, which combined with $\Delta \geq 2\gamma$ completes the proof.
\end{proof}

\subsection{Proof of \Cref{thm:2}}\label{app:thm2}

\begin{lemma}\label{lem:beta}
Let $\vx, \vy \overset{iid}{\sim} \mbox{Uniform}(\mathbb{S}^{d-1})$. Then $z = \langle \vx, \vy \rangle \sim f_Z$ where 
\begin{equation}\label{eq:fz}
    f_Z(z) = \Gamma(d/2)/(\sqrt{\pi}\,\Gamma((d-1)/2)) (1-z^2)^{(d-3)/2}, \; z \in [-1, 1].
\end{equation}
\end{lemma}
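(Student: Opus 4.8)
The plan is to reduce the two-point computation to a one-dimensional one via rotational invariance. Since the law of $\vx$ is invariant under orthogonal transformations and $\vx$ is independent of $\vy$, I would condition on $\vy$ and apply a (random) rotation sending $\vy$ to $\ve_1$; this does not change the distribution of $\vx$, and it shows that $z = \langle \vx, \vy \rangle$ has the same law as the first coordinate $x_1$ of a single uniformly distributed $\vx \in \mathbb{S}^{d-1}$. Hence it suffices to compute the density of $x_1$.

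To obtain that density, I would use the spherical-coordinate representation $\vx = (\cos\phi,\ \sin\phi\,\vu)$ with $\phi \in [0,\pi]$ and $\vu \in \mathbb{S}^{d-2}$, under which the (unnormalized) surface measure of $\mathbb{S}^{d-1}$ factorizes as $\sin^{d-2}\phi\, d\phi\, d\sigma_{d-2}(\vu)$. Integrating out $\vu$ leaves $\phi$ with density proportional to $\sin^{d-2}\phi$ on $[0,\pi]$. Substituting $z = \cos\phi$, so that $dz = -\sin\phi\, d\phi$ and $\sin^{d-2}\phi = (1-z^2)^{(d-2)/2}$, transfers this to a density proportional to $(1-z^2)^{(d-2)/2}/\sqrt{1-z^2} = (1-z^2)^{(d-3)/2}$ for $z \in [-1,1]$, which matches the claimed functional form. (As sanity checks, $d=2$ gives the arcsine law and $d=3$ gives the uniform law on $[-1,1]$, both consistent with known facts.)

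Finally I would pin down the normalizing constant by evaluating the Beta integral $\int_{-1}^{1}(1-z^2)^{(d-3)/2}\, dz = B\!\left(\tfrac12, \tfrac{d-1}{2}\right) = \sqrt{\pi}\,\Gamma\!\left(\tfrac{d-1}{2}\right)/\Gamma\!\left(\tfrac{d}{2}\right)$; its reciprocal is exactly the constant appearing in \eqref{eq:fz}. The only step demanding any care is the factorization of the spherical surface measure (equivalently, the Jacobian of $\phi \mapsto \cos\phi$ together with the slice geometry of $\mathbb{S}^{d-1}$), which is standard; I do not anticipate a substantive obstacle.
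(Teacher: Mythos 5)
Your proof is correct, but it takes a genuinely different route from the paper's. You both begin with the same reduction: by rotational invariance and independence, $z=\langle\vx,\vy\rangle$ has the same law as the first coordinate of a single uniform point on $\mathbb{S}^{d-1}$. From there you proceed geometrically, factoring the surface measure via the spherical-coordinate parametrization $\vx=(\cos\phi,\,\sin\phi\,\vu)$, integrating out $\vu\in\mathbb{S}^{d-2}$ to obtain a density $\propto\sin^{d-2}\phi$ on $[0,\pi]$, and then changing variables $z=\cos\phi$ to land directly on $(1-z^2)^{(d-3)/2}$, with the normalizing constant pinned down by the Beta integral. The paper instead invokes the Gaussian representation of the uniform distribution on the sphere: writing $\vx_1^2\overset{d}{=}w_1^2/(w_1^2+\cdots+w_d^2)$ with $w_i$ i.i.d.\ standard normals, recognizing $z^2=v_1/(v_1+v_2)\sim\mathrm{Beta}(1/2,(d-1)/2)$ from the $\chi^2$--Beta relationship, and then recovering the density of $z$ from that of $z^2$ via $F_{Z^2}(z^2)=2F_Z(z)$. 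Your approach is more self-contained and elementary (it needs only the Jacobian of the slice decomposition and a Beta integral), whereas the paper's leans on standard distribution theory and gets the normalizing constant for free from the Beta density without evaluating an integral. Both arguments are sound and of comparable length; neither has a gap.
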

\begin{proof}
    By symmetry, we can fix $\vy = \ve^{(1)}$. We have that $z^2 = \vx_1^2 \overset{d}{=} w_1^2 / (w_1^2 + \dots + w_d^2)$ where $w_1, \dots, w_d \overset{iid}{\sim} \mathcal{N}(0, 1)$. Letting $v_1 = w_1^2$ and $v_2 = w_2^2 + \dots + w_d^2$ such that $v_1 \sim \chi^2_1$ and $v_2 \sim \chi^2_{d-1}$, we have $z^2 = v_1 / (v_1 + v_2) \sim \mbox{Beta}(1/2, (d-1)/2)$ (see the relationship between chi-squared distribution and beta distribution in \cite{soch2024statproofbook}). Then we have 
    \begin{equation*}
        F_{Z^2}(z^2) = \mathbb{P} \{Z^2 \leq z^2\} = \mathbb{P} \{-z \leq Z \leq z\} = 2F_Z(z)
    \end{equation*}
    hence
    \begin{align*}
        f_Z(z) = zf_{Z^2}(z^2) &= \frac{\Gamma(d/2)}{\Gamma(1/2)\Gamma((d-1)/2)} z\cdot z^{2(1/2-1)} (1-z^2)^{(d-1)/2-1} \\
        &= \frac{\Gamma(d/2)}{\sqrt{\pi}\,\Gamma((d-1)/2)} (1-z^2)^{(d-3)/2},
    \end{align*}
    completing the proof.
\end{proof}

\begin{lemma}\label{lem:bessel}
    Let $z \sim f_Z$ where $f_Z$ is given in \eqref{eq:fz} and $a > 0$. Then we have $\E[z] = 0$, and
    \begin{align}
        \E[\exp(az)] &= \Gamma\left(\frac{d}{2}\right) \left(\frac{2}{a}\right)^\rho I_\rho(a) \label{eq:e1}\\
        \E[z\exp(az)] &= \Gamma\left(\frac{d}{2}\right) \left(\frac{2}{a}\right)^\rho \left( I_{\rho-1}(a) - \frac{2\rho}{a} I_\rho(a) \right) \label{eq:e2}
    \end{align}
    where $\rho = (d-2)/2$, $\Gamma$ is the gamma function and $I_\rho(z)$ is the modified Bessel function of the first kind.
\end{lemma}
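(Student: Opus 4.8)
\textbf{Proof proposal for Lemma~\ref{lem:bessel}.}

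The plan is to reduce everything to a single one-dimensional integral against the density $f_Z$ from \eqref{eq:fz} and then recognize that integral as an integral representation of the modified Bessel function $I_\rho$. First, $\E[z] = 0$ is immediate from the fact that $f_Z$ is an even function of $z$ on $[-1,1]$, so the odd integrand integrates to zero. For \eqref{eq:e1}, I would write
\begin{align*}
    \E[\exp(az)] = \frac{\Gamma(d/2)}{\sqrt{\pi}\,\Gamma((d-1)/2)} \int_{-1}^1 e^{az} (1-z^2)^{(d-3)/2}\, dz,
\end{align*}
and then invoke the standard Poisson-type integral representation of the modified Bessel function, namely
\begin{align*}
    I_\rho(a) = \frac{(a/2)^\rho}{\sqrt{\pi}\,\Gamma(\rho + 1/2)} \int_{-1}^1 e^{az}(1-z^2)^{\rho - 1/2}\, dz,
\end{align*}
valid for $\Re(\rho) > -1/2$. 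Matching exponents, $\rho - 1/2 = (d-3)/2$ gives exactly $\rho = (d-2)/2$, and $\rho + 1/2 = (d-1)/2$, so the prefactors telescope: the $\Gamma((d-1)/2) = \Gamma(\rho+1/2)$ cancels and one is left with $\E[\exp(az)] = \Gamma(d/2)(2/a)^\rho I_\rho(a)$, which is \eqref{eq:e1}.

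For \eqref{eq:e2}, the cleanest route is to differentiate \eqref{eq:e1} with respect to $a$. Since $\frac{d}{da}\E[\exp(az)] = \E[z\exp(az)]$ (differentiation under the integral sign is justified because the integrand and its $a$-derivative are bounded on the compact interval $[-1,1]$), I would compute
\begin{align*}
    \E[z\exp(az)] = \frac{d}{da}\left[ \Gamma\!\left(\tfrac{d}{2}\right)\left(\tfrac{2}{a}\right)^\rho I_\rho(a) \right] = \Gamma\!\left(\tfrac{d}{2}\right)\left(\tfrac{2}{a}\right)^\rho \left( I_\rho'(a) - \frac{\rho}{a} I_\rho(a) \right).
\end{align*}
Then I would apply the derivative identity $I_\rho'(a) = I_{\rho-1}(a) - (\rho/a) I_\rho(a)$ for the modified Bessel function (equivalently obtainable from the recurrence $I_{\rho-1}(a) - I_{\rho+1}(a) = (2\rho/a) I_\rho(a)$ together with $2I_\rho'(a) = I_{\rho-1}(a) + I_{\rho+1}(a)$, both in \cite{magnus1967formulas}). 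Substituting, $I_\rho'(a) - (\rho/a) I_\rho(a) = I_{\rho-1}(a) - (2\rho/a) I_\rho(a)$, which yields \eqref{eq:e2} exactly.

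The main obstacle is really just bookkeeping: getting the normalizing constants and the parameter matching right so that the Gamma factors cancel cleanly, and making sure the integral representation of $I_\rho$ being cited applies for the relevant range of $d$ (one needs $d \geq 2$ so that $\rho = (d-2)/2 \geq 0 > -1/2$; the edge behavior at $d=2$, where the density degenerates, should be checked or excluded). Alternatively, if one prefers not to quote the Poisson integral representation, \eqref{eq:e1} can be derived from scratch by expanding $e^{az} = \sum_k a^k z^k / k!$, integrating term by term using the Beta-function moments $\int_{-1}^1 z^{2m}(1-z^2)^{(d-3)/2} dz = B(m+1/2,(d-1)/2)$, and resumming against the series definition of $I_\rho$; this is more computational but avoids citing an external integral identity. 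Either way, \eqref{eq:e2} then follows by the same differentiation-in-$a$ argument.
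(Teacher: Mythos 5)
Your proof is correct and follows essentially the same route as the paper: $\E[z]=0$ by symmetry, \eqref{eq:e1} via the Poisson integral representation of $I_\rho$ (the paper cites the equivalent DLMF identity~(10.32.2)), and \eqref{eq:e2} by differentiating \eqref{eq:e1} in $a$ and applying the Bessel derivative recurrence $I_\rho'(a) = I_{\rho-1}(a) - (\rho/a)I_\rho(a)$, exactly as the paper does. Your added remarks on the range of validity ($\rho > -1/2$, i.e.\ $d\ge 2$) and the alternative term-by-term series derivation are sound but not needed for the argument.
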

\begin{proof}
    The first claim $\E[z] = 0$ follows directly from symmetry of $f_Z$. Now define
    \begin{equation*}
        F(a, d) = \frac{1}{\sqrt{\pi}\,\Gamma((d-1)/2)} \int_{-1}^1 \exp(az)(1-z^2)^{(d-3)/2}\,dz.
    \end{equation*}
    From the identity in \cite[\href{https://dlmf.nist.gov/10.32.E2}{(10.32.2)}]{NIST:DLMF}, we have that $F(a, d) = (2/a)^{(d-2)/2} I_{(d-2)/2}$. Now
    \begin{align*}
        \E[\exp(az)] = \frac{\Gamma(d/2)}{\sqrt{\pi}\,\Gamma((d-1)/2)} \int_{-1}^1 \exp(az) (1-z^2)^{(d-3)/2}\,dz = \Gamma(d/2) F(a, d)
    \end{align*}
    which gives \eqref{eq:e1}. On the other hand, we have
    \begin{equation*}
        \E[z\exp(az)] = \frac{d\,\E[\exp(az)]}{da} = \Gamma(d/2) \frac{dF(a, d)}{da}
    \end{equation*}
    where 
    \begin{equation*}
        \frac{dF(a, d)}{da} = \frac{1}{2}(1-d/2)\left(\frac{2}{a}\right)^{d/2}I_{(d-2)/2}(a) + \left(\frac{2}{a}\right)^{(d-2)/2} \frac{dI_{(d-2)/2}(a)}{da}
    \end{equation*}
    with
    \begin{equation*}
        \frac{dI_{(d-2)/2}(a)}{da} = I_{(d-4)/2} - \frac{(d-2)}{2a} I_{(d-2)/2}(a)
    \end{equation*}
    by the identity in \cite[(3.1.1)]{magnus1967formulas}. Putting together the above formulas and simplifying yields \eqref{eq:e2}, completing the proof.
\end{proof}

\begin{proof}[Proof of \Cref{thm:2}]
Let $\mZ = \mH_X(0) \mH_Y^\top(0)$, so by \Cref{lem:beta} we have $\mZ_{ij} \sim f_Z$ where $f_Z$ is given by \eqref{eq:fz}. We also note that any given row, column, or diagonal of $\mZ$ forms a collection of independent variables (but not the entire matrix). \\~\\
Let $\vz \in [-1, 1]^n$ be a given row, column, or the diagonal of $\mZ$. From \Cref{lem:bessel} we have $\mathbb{E}[z] = 0$, and let $\xi_1 = \E[z\exp(az)]$ and $\xi_2 = \E[\exp(az)]$. By $\vz_i \in [-1, 1]$, $\vz_i\exp(a\vz_i) \in [-e^a, e^a]$, and $\exp(a\vz_i) \in [e^{-a}, e^a]$ for all $i \in [n]$, applying one-sided Hoeffding inequalities gives 
\begin{align*}
    &\mathbb{P}\left\{\frac{1}{n}\sum_j \vz_j \leq 2\sqrt{\frac{\log(1/\delta)}{2n}}\right\} \geq 1-\delta, \\
    &\mathbb{P}\left\{\frac{1}{n}\sum_j \vz_j \exp(a \vz_j) \geq \xi_1 - 2e^a \sqrt{\frac{\log(1/\delta)}{2n}}\right\} \geq 1-\delta, \\
    &\mathbb{P}\left\{\frac{1}{n}\sum_j \exp(a \vz_j) \leq \xi_2 + (e^a - e^{-a}) \sqrt{\frac{\log(1/\delta)}{2n}}\right\} \geq 1-\delta.
\end{align*}
Let $a_0 = \beta_0 (1-\gamma_0^2)$. We compute
\begin{align*}
    \left.\frac{d\gamma}{dt}\right|_{t=0} &= -2\beta_0\gamma_0 g_{\beta_0(1-\gamma_0^2)}(\mZ) \\
    &= -\frac{\beta_0\gamma_0}{n} \left(\sum_{i=1}^n \mu_i(\mZ_{:, i}, a_0) + \mu_i(\mZ_{i, :}, a_0) \right) \\
    &= -\frac{\beta_0\gamma_0}{n} \left(\sum_{i=1}^n \langle \mZ_{:, i}, \ve^{(i)} - \sigma(a_0\mZ_{:, i}) \rangle  + \langle \mZ_{i, :}, \ve^{(i)} - \sigma(a_0\mZ_{i, :}) \rangle \right) \\
    &= \frac{\beta_0\gamma_0}{n} \left(\sum_{i=1}^n \langle \mZ_{:, i}, \sigma(a_0\mZ_{:, i}) \rangle + \sum_{i=1}^n \langle \mZ_{i, :}, \sigma(a_0\mZ_{i, :}) \rangle - 2\sum_{j=1}^n \mZ_{j, j}\right) \\
    &= \frac{\beta_0\gamma_0}{n} \left(\sum_{i=1}^n \frac{\frac{1}{n}\sum_{j=1}^n \mZ_{j, i}\exp(a_0\mZ_{j, i})}{\frac{1}{n}\sum_{j=1}^n\exp(a_0\mZ_{j, i})} + \sum_{i=1}^n \frac{\frac{1}{n}\sum_{j=1}^n \mZ_{i, j}\exp(a_0\mZ_{i, j})}{\frac{1}{n}\sum_{j=1}^n\exp(a_0\mZ_{i, j})} - 2\sum_{j=1}^n \mZ_{j, j}\right).
\end{align*}
Applying the above concentration inequalities yields
\begin{equation*}
    \left.\frac{d\gamma}{dt}\right|_{t=0} \geq 2 \beta_0 \gamma_0 \left(\frac{\xi_1 - 2e^{a_0} \epsilon}{\xi_2 + (e^{a_0}-e^{-a_0})\epsilon} - 2\epsilon\right)
\end{equation*}
with probability $1-\delta$ where $\epsilon = \sqrt{\log((4n+1)/\delta)/(2n)}$ by union bound for $4n+1$ events. Finally, we have $d\Delta/dt \geq 2 d\gamma/dt$ by $\Delta \geq 2\gamma$ which gives the result.
\end{proof}
\section{Alternate Temperature Schemes}\label{app:temperature}
Consider the simplified setting $\ell(m) = \exp(-m)$ discussed in \Cref{sec:main}.

\paragraph{Temperature reparameterization.} Take $\beta(\nu) = \nu$ with $\beta'(\nu) = 1$ for $\nu > 0$. Integrating \eqref{eq:coupling} gives $\beta(\gamma) = \Theta\left(\log(1/\gamma)^{1/2}\right)$, which grows significantly slower compared to \eqref{eq:beta_gamma} as $\gamma \rightarrow 0$. As a result, the dynamics of $\gamma$ become $d\gamma/dt = \Theta\left(-\log(1/\gamma)^{1/2}\gamma \exp(-\log(1/\gamma)^{1/2})\right)$, which has solution $\gamma(t) = \Theta(1/t^{\log(t)})$, a significantly \emph{faster} rate of convergence than the one in \Cref{thm:1}.
\paragraph{Temperature scheduling.} Alternatively, one can consider a temperature \emph{schedule}, where $\beta(t)$ is simply a function of time $t$. Then $\gamma(t) = \Theta\left(\exp\left(-\int_0^t \beta(s) \exp(-\beta(s))\,ds\right)\right)$. In the simplest case, we can consider $\beta(s) = \beta_0$, i.e., a constant temperature, for which we have $\gamma(t) = \Theta(\exp(-\beta_0\exp(-\beta_0)t))$. Provided that $\beta_0$ is not too large or too small, the modality gap closes quickly at a linear rate. More generally, a linear schedule $\beta(s) = (1-(s/t))\beta_0 + (s/t)\beta_1$ also yields a linear decay $\gamma(t) = \exp(-c(\beta_0, \beta_1)t)$ where $c(\beta_0, \beta_1) = [(\beta_0 + 1)\exp(-\beta_0) - (\beta_1 + 1)\exp(-\beta_1)]/(\beta_1 - \beta_0)$, while allowing a sweep of temperature values throughout training.
\section{Methods}\label{appendix_method_detail}

In this section, we introduce the proposed methods in greater details, and present representative results in \Cref{tab:representative} as examples. We include more ablation studies in \Cref{appendix:more_results}.

\subsection{Temperature Control} 

\paragraph{Temperature Scheduling (TS).} First, we propose to not learn the temperature as \citep{radford2021learning}. Instead, we schedule the increase of temperature according to the training epochs \emph{linearly}. With a larger temperature in the late stage, we can make more progress on reducing the modality gap. As shown in \Cref{tab:representative}, linearly increasing temperature from $10^{-2}$ to $5\cdot 10^{-2}$ over the training process reduces the modality gap and leads to better text-image retrieval performance. 


\paragraph{Temperature Reparameterization (TR).} Towards the same purpose, TR designs different parameterization $\tau(\nu)$ from CLIP to slow down the decreasing of $\tau(\nu)$ when learning $\nu$. In comparison with the original CLIP with $1/\tau(\nu) = \exp(\nu)$, we propose to use
$1/\tau(\nu) = \exp({\nu}/{s})$ with $s > 1$ being a scalar to reduce the influence of $\nu$ as it grows. Another way we propose is to design through softplus to achieve the same goal: $1/\tau(\nu)=\log \left(1+e^{\nu}\right)$. We report results using softplus in \Cref{tab:representative}.


\paragraph{Smaller Learning Rate of Temperature (SLRT).} To slow down the decrease of $\tau(\nu)$, SLRT scales down the learning rate of $\nu$ directly. In \Cref{tab:representative}, we report the results from scaling the learning rate of $\nu$ by $10^{-1}$. The result shows improved performance with reduced gap. It implies that the original choice of $\nu$ in CLIP is far from optimal, and can be improved through a careful analysis.

\paragraph{Fixed on Large Temperature (FLT).} FLT freezes the temperature instead of learning it. We let $\tau(\nu)$ range from $10^{-2}$ to $4\cdot 10^{-1}$ to find temperatures that can shrink the modality gap. The results from \Cref{tab:representative} is obtained with $\tau(\nu) = 4\cdot 10^{-2}$.

\subsection{Swap Modalities} 

\paragraph{Hard Swapping Between Modalities (HS).} Recall $\mH_{X}, \mH_{Y} \in \R^{n \times d}$ denote the image and text features. HS randomly swaps $\mH_{X}[i,j]$ and $\mH_{Y}[i,j]$ for each $(i,j)$ independently with probability $0.5$ to obtain $\overline{\mH}_{X}, \overline{\mH}_{Y}$. \textit{Then, the swapped features are input to the loss function for optimizing networks.} Such swapping only happens for $p$ random portion of the training, thus $p$ controls how much swapping is applied across training. We have conducted search on $p$ from $1e-3$ to $1.0$. It turns out to be a strong method for closing the gap. The results from \Cref{tab:representative} are obtained with $p=1e-3$.

\paragraph{Soft Swapping Between Modalities (SS).} SS is similar to HS, but swaps entries in a soft way. In SS, we randomly sample $\lambda_{ij} \in [0,1]$ for each $(i,j)$ independently, and then set $\overline{\mH}_{X}[i,j] = \lambda_{ij}\mH_{X}[i,j] + (1-\lambda_{ij})\mH_{Y}[i,j]$ and $\overline{\mH}_{Y}[i,j] = \lambda_{ij}\mH_{Y}[i,j] + (1-\lambda_{ij})\mH_{X}[i,j]$. Similarly, the swapped features are input to the loss function for optimizing networks. Besides, swapping is also conducted for $p$ random portion of the training. We have conducted a search on $p$ from $1e-2$ to $1e-1$ The result from \Cref{tab:representative} is obtained with $p=5e-2$.

\begin{table*}[t]
\small
\centering
\begin{tabular}{@{}c|c|cccc|cc@{}}
\toprule
                     & \multirow{2}{*}{Baseline} & \multicolumn{4}{c|}{Controlling the Temperature} & \multicolumn{2}{c}{Eliminating the Separation of Modalities} \\ \cmidrule(l){3-8} 
                     &                           & FLT    & TS   & TR   & SLRT   & HS        & SS        \\ \midrule
Modality Gap ($\downarrow$)     &    0.294                       &  0.122      &   \textbf{0.088}         & 0.284               &   0.268           &      0.225            &   0.139               \\ 
Uniformity ($\uparrow$)     &    -1.165                       &  -1.183      &   \textbf{-1.108}         & -1.153               &   -1.152           &      -1.156            &   -1.152               \\ 
\midrule
Zero-shot ($\uparrow$)          &     13.650                      &  14.400      & \textbf{17.167}           &  14.860              &   15.130           &    13.910              &   14.330               \\
Linear Probe ($\uparrow$)       &   65.550                        &  61.980      &  \textbf{67.500}          &   66.460             &  66.630            &    65.950              &   64.940              \\
Image Retrieval ($\uparrow$) &  68.233                         &  73.440      &   \textbf{75.723}        &   70.060             &  70.320            &   69.716               &   68.912               \\
Text Retrieval ($\uparrow$) &   78.367                        & 82.660       &  \textbf{85.313}          &   80.100             &   80.400           &    79.740              &   79.160               \\\bottomrule
\end{tabular}
\caption{\textbf{Mitigating the modality gap with our proposed methods.}}
\label{tab:representative}
\end{table*}
\section{Experiment Settings}
\label{appendix:exp}

\subsection{Pretraining}
\paragraph{Basics}
For the CLIP pretraining on MSCOCO using the train split, we utilize the OpenCLIP platform produced by \cite{openclip}. The neural network architecture for the image encoder is ResNet-50 \citep{He2015DeepRL} and the architecture for the text encoder is Transformer \citep{transformer}. Both architectures can be specified by 'RN50' in the OpenCLIP platform. The input image is RandomResizedCrop to $224\times224$ followed by Normalization. For each method of pretraining, we select the best learning rate from [$1e-5$, $5e-5$, $1e-4$, $5e-4$]. For each training process, we train the model for $100$ epochs with batchsize $1024$ using a cosine decay learning rate schedule and 20 warmup epochs. We conduct the training 3 times for each specific setting and report the mean of the results among all 3 trainings.

\paragraph{Computing Requirements}
All the experiments can be conducted on a single A100 GPU.

\subsection{Metrics}
\paragraph{Modality Gap}
We measure the modality gap as modality center distance. The center distance is calculated between $512$ random pairs in the MSCOCO test split. Let $\bm X$ and $\bm Y$ be image and text features, where each column is a feature sample, we define the modality centers as 
$$
\bm c_{\mX} = \frac{1}{n} \sum_{j=1}^n \bm x_{j},
\bm c_{\mY} = \frac{1}{n} \sum_{j=1}^n \bm x_{j}.
$$
Then as proposed in \cite{liang2022mind}, we measure modality gap by the center distance:
$$
m_{\bm X \bm Y} = \|\bm{\Delta}\| = \|\bm c_{\bm X} - \bm c_{\bm Y}\|.
$$
The center distance reported is also averaged among three independent trainings. A larger center distance indicates a larger modality gap.

\paragraph{Uniformity}
We utilize the uniformity metrix proposed by \cite{fang2023rethinking}. After concatenating $\bm X$ and $\bm Y$ and obtain $\bm Z \in \R^{d \times 2n}$, we define the sample mean of $\bm Z$ as $\widehat{\bm{\mu}}$ and the sample variance of $\bm Z$ as $\widehat{\bm{\mu}}$. Calculate the quadratic Wasserstein distance between $\mathcal{N}(\widehat{\bm{\mu}}, \widehat{\bm{\Sigma}})$ and $\mathcal{N}(\mathbf{0}, \mathbf{I} / m)$:
$$
\mathcal{W}_2:=\sqrt{\|\widehat{\bm{\mu}}\|_2^2+1+\operatorname{tr}(\widehat{\bm{\Sigma}})-\frac{2}{\sqrt{m}} \operatorname{tr}\left(\widehat{\bm{\Sigma}}^{\frac{1}{2}}\right)}.
$$

Then $-\mathcal{W}_2$ serves as the uniformity metric. A larger value of $-\mathcal{W}_2$ indicates a better uniformity of the feature space spanned by $\bm Z$.

\subsection{Downstream Tasks}

\paragraph{Linear Probe}
We conduct linear probing on CIFAR 10 following the same setting in \cite{radford2021learning}. After pertaining, we extract the image features from the vision encoder, and train a logistic regression based using the features in the train split. We evaluate the classification accuracy of the features in the test split and report the averaged results across three independent trainings. We report the Top-1 classification accuracy.

\paragraph{Zero-shot Classification}
We conduct linear probing on CIFAR 10 following the same setting in \cite{radford2021learning}. For each class containing [NUMBER], we construct the text prompt as "A photo of the number [NUMBER]". For each image, the text encoder will embed the text prompts into text features for all classes, and the image encoder will embed the image into an image feature. The probability of each class is then obtained from the softmax of the cosine similarities between the text features and the image feature scaled by temperature. We report the Top-1 classification accuracy. All values are averaged across three trainings.

\paragraph{Text-image Retrieval}
We conduct image-to-text retrieval and text-to-image retrieval on the text split of MSCOCO, and report the Top-1 recall. All values are averaged across three trainings. Image-to-text retrieval aims to retrieve the most relevant text description given an input image by computing the similarities of image and text features, while text-to-image retrieval is performed the other way around.

\paragraph{MMVP-VLM}
The MMVP-VLM benchmark is proposed by \cite{Tong2024EyesWS}, which can be viewed as a "difficult retrieval task". Given two image and text pairs ($\bm x_1$, $\bm y_1$) and ($\bm x_2$, $\bm y_2$) that only have differences concerning certain detailed visual pattern, the CLIP model is required to perfectly match the two pairs. They define 9 visual patterns: Orientation and Direction, Presence of Specific Features, State and Condition, Quantity and Count, Positional and Relational Context, Color and Appearance, Structural and Physical Characteristics, Text, and Viewpoint and Perspective. Each visual pattern contains several text-image pairs for constructing the matching task. We report the average accuracy across 9 visual pattern groups. All results are averaged across three trainings.


\begin{table*}[t]
\centering
\begin{tabular}{@{}lllllll@{}}
\toprule
         & Uniformity & Modality Gap & Zero Shot & Linear Probe & Image Retrieval & Text Retrieval \\ \midrule
Baseline & -1.165     & 0.294       & 13.65     & 65.55        & 68.23           & 78.36          \\
1.00E-02 & -1.214     & 0.669       & 18.48     & 69.44        & 69.89           & 80.68          \\
2.00E-02 & -1.138     & 0.334       & 18.83     & 68.41        & 70.54           & 81.3           \\
4.00E-02 & -1.183     & 0.122       & 14.4      & 61.98        & 73.44           & 82.66          \\
7.00E-02 & -1.246     & 0.033       & 11.76     & 49.61        & 75.53           & 82.88          \\
1.00E-01 & -1.251     & 0.019       & 10.42     & 48.29        & 66.55           & 73.58          \\
2.00E-01 & -1.255     & 0.008       & 10.55     & 50.14        & 23.97           & 26.94          \\
3.00E-01 & -1.282     & 0.007       & 10.69     & 42.09        & 10.4            & 12.8           \\
4.00E-01 & -1.296     & 0.009       & 13.83     & 37.97        & 6.9             & 8.7            \\ \bottomrule
\end{tabular}
\caption{\textbf{More results for fixed temperature (FLT).}}
\label{tab:more-fls}
\end{table*}

\begin{table*}[t]
\centering
\begin{tabular}{@{}lllllll@{}}
\toprule
         & Uniformity & Modality Gap & Zero Shot & Linear Probe & Image Retrieval & Text Retrieval \\ \midrule
Baseline & -1.165     & 0.294       & 13.65     & 65.55        & 68.23           & 78.36          \\
S0       & -1.120     & 0.384       & 18.40     & 69.40        & 72.55           & 82.89          \\
S1       & -1.087     & 0.198       & 18.55     & 69.26        & 72.69           & 83.05          \\
S2       & -1.092     & 0.137       & 16.80     & 68.25        & 73.98           & 84.25          \\
S3       & -1.108     & 0.088       & 17.17     & 67.50        & 75.72           & 85.31          \\
A0       & -1.151     & 0.541       & 19.50     & 69.47        & 72.51           & 81.97          \\
A1       & -1.120     & 0.261       & 16.85     & 67.46        & 73.90           & 83.65          \\ \bottomrule
\end{tabular}
\caption{\textbf{More results for temperature scheduling (TS).}}
\label{tab:more-ts}
\end{table*}

\begin{table*}[t]
\centering
\small
\begin{tabular}{@{}lllllll@{}}
\toprule
                     & Uniformity & Modality Gap & Zero Shot & Linear Probe & Image Retrieval & Text Retrieval \\ \midrule
Baseline             & -1.165     & 0.294       & 13.65     & 65.55        & 68.23           & 78.36          \\ \midrule
Softplus             & -1.153     & 0.284       & 14.86     & 66.46        & 70.06           & 80.1           \\
Scale2               & -1.154     & 0.287       & 18.25     & 66.3         & 70.32           & 80.6           \\
Scale10              & -1.153     & 0.285       & 14.36     & 66.5         & 70.15           & 80.3           \\ \midrule
HS P1                & -1.317     & 0.028       & 14.36     & 39.72        & 47.17           & 53.22          \\
HS P1e-1             & -1.127     & 0.046       & 17.39     & 56.88        & 67.1            & 75.8           \\
HS P1e-2             & -1.181     & 0.095            & 15.7   & 64.29      & 69.36           & 79.4           \\ \midrule
Hard Swap Feature        & -1.155     & 0.289       & 14.71     & 65.91        & 69.93           & 80.6           \\ \midrule
Remove Normalization & -1.005     & 0.468       & 16.82  & 65.4      & 58.62        & 74.16               \\ \bottomrule
\end{tabular}
\caption{\textbf{More results for other methods.}}
\label{tab:more-other}
\end{table*}

\section{Additional Experimental Results}
\label{appendix:more_results}


In this section, we provide more detailed results on different methods that reduce (or enlarge) the modality gap. We introduce details of the results below.
\begin{itemize}
    \item More results for fixed temperature: we provide a more fine-grid temperature search from $1e^{-2}$ to $4e^{-1}$, and present the results in \Cref{tab:more-fls}.
    \item More results for temperature scheduling: The three temperature schedules S1, S2, S3 presented in \Cref{tab:implication} in \Cref{mainpaper_method_detail} have the following schedules. S1 corresponds to linearly increasing temperature from $1e-2$ to $3e-2$, S2 corresponds to corresponds to linearly increasing temperature from $1e-2$ to $4e-2$, and S3 corresponds to linearly increasing temperature from $1e-2$ to $5e-2$. Apart from the above scheduling, we additionally show results for S1, A0, and A1 in \Cref{tab:more-ts}. Here, S0 corresponds to linearly increasing temperature from $1e-2$ to $2e-2$, A1 corresponds to the alternation of temperature between $1e-2$ and $2e-2$, and A2 corresponds to the alternation of temperature between $1e-2$ and $4e-2$. Temperature alternation follows a cosine function.
    \item More results for other methods: More results for other methods are shown in \Cref{tab:more-other}. Here, Scale2 and Scale10 corresponding to temperature parameterization with $\tau(\nu) = \exp(\frac{\nu}{s})$, where $s = 2, 10$, respectively. HS P* refers to Hard Swapping with $p = *$ as discussed in \Cref{appendix_method_detail}. Besides, Hard Swap Feature means swapping the entire row instead of independent entries. It can also break the modality boundary but with greater constraints. As we can see, Hard Swap Feature reduces the center distance as well, but the reduced amount is less than HS (swapping entries). Lastly, Remove Normalization means freezing the temperature while removing the normalization of features during training. This approach is equivalent to learning a "temperature parameter" for each feature, in the sense that the temperature parameter is a global scaling of feature norms. With more freedom in changing the "temperatures" to optimize losses, Remove Normalization results in a larger modality gap.
\end{itemize}



\end{document}